\def\ci{\perp\!\!\!\perp}
\newcommand{\iend}[1]{\; \mathrm{d}#1}
\newtheorem{lemma}{Lemma}
\newtheorem{theorem}{Theorem}
\newtheorem{remark}{Remark}
\DeclareMathOperator*{\argmax}{arg\,max}
\title{Score-based Causal Learning in Additive~Noise~Models}
\author{Christopher Nowzohour\footnote{Corresponding Author}\\Seminar f\"ur Statistik, ETH Z\"urich\\nowzohour@stat.math.ethz.ch \and Peter B\"uhlmann\\Seminar f\"ur Statistik, ETH Z\"urich\\buhlmann@stat.math.ethz.ch}
\begin{document}

\bibliographystyle{plainnat}

\maketitle

\begin{abstract}
Given data sampled from a number of variables, one is often interested in the underlying causal relationships in the form of a directed acyclic graph. In the general case, without interventions on some of the variables it is only possible to identify the graph up to its Markov equivalence class. However, in some situations one can find the true causal graph just from observational data, for example in structural equation models with additive noise and nonlinear edge functions. Most current methods for achieving this rely on nonparametric independence tests. One of the problems there is that the null hypothesis is independence, which is what one would like to get evidence for. We take a different approach in our work by using a penalized likelihood as a score for model selection. This is practically feasible in many settings and has the advantage of yielding a natural ranking of the candidate models. When making smoothness assumptions on the probability density space, we prove consistency of the penalized maximum likelihood estimator. We also present empirical results for simulated scenarios and real two-dimensional data sets (cause-effect pairs) where we obtain similar results as other state-of-the-art methods.
\end{abstract}

\section{Introduction}
Statistical causal inference is an important but relatively new field. Traditionally, most statistical statements and assertions are associational ($X$ and $Y$ are correlated), rather than causal (changes in $X$ cause changes in $Y$). While the former are statements about the joint distribution, the latter are about the underlying causal mechanisms. In practice, the relevant question often is whether variable $X$ has a causal effect\footnote{$X$ has a causal effect on $Y$ if manipulating $X$ changes the distribution of $Y$, see \cite{PeaJ00}.} on variable $Y$, possibly mediated by some other variables $Z_1, \ldots, Z_d$ in the causal network. In general, the only way to completely identify the causal model is by performing experiments (interventions). However, it is often possible to at least narrow down the space of candidate models by using only observational data \citep{VerTP91, SpiPA93}. There are many situations where one is dependent on purely observational data---either because performing experiments is infeasible (e.g.\ astronomical data), unethical (e.g.\ clinical cancer studies), or both (e.g.\ economical data). Some real-life examples include identifying gene expression networks \citep{StaAA12, SteDA12} and analysing fMRI data from the human brain \citep{RamJA10}.

When modeling causal networks between some given variables, structural equation models are used frequently, where each variable is expressed as a function of some other variables (its causes) as well as some noise. Thus the model is determined by the cause-effect structure (in the form of a directed graph over the variables), the functional dependencies, and the joint distribution of the noise terms. Assumptions typically made include that the underlying causal model is acyclic (i.e.\ there are no feedback loops) and that the noise terms are independent (i.e.\ there are no unobserved variables). We furthermore assume that the noise is additive, i.e.\ the effect variable minus some noise term is a deterministic function of the cause variables. Although quite restrictive, this is a common assumption in many other settings (e.g.\ regression) and allows straightforward estimation. The standard case then is to parameterize the model by making the functional dependencies linear and the noise Gaussian\footnote{In fact, this is how structural equation models where first introduced and continue to be used today \citep{BolKA89}.}. In this case the space of candidate models (in the form of directed acyclic graphs) clusters in equivalence classes, which prohibit full identification---every model in a given equivalence class can induce the same joint distribution over the variables. In a sense, this is quite exceptional, however. It has been shown that as soon as one departs from the linearity or the Gaussianity assumptions the model becomes fully identifiable\footnote{Except for a set of degenerate cases of measure zero.} \citep{ShiSA06, HoyPA09, ZhaKH09, PetJ11, PetJB12}. We are thus interested in the nonparametric case, where either the functional dependencies are nonlinear or the noise terms are non-Gaussian (or both). An inference procedure for this case based on nonparametric independence tests has been suggested by \citet{MooJA09}. Their method is using the fact that when fitting the wrong model the noise terms will not be independent. There are a few problems with this approach, however. First, the null hypothesis of the tests employed is independence, which is what one would like to show, and statistical hypothesis testing only allows to reject such hypotheses. Second, because of the many tests involved there is a multiple testing problem. Third, nonparametric independence testing among many variables is statistically hard, and the tests tend to be computationally intensive.

We take a different approach in the form of a score-based method, which is consistent, fast, and easily adaptable to greedy methods for large problems. Score-based methods are widely used for fitting Gaussian structural equation models \citep{ChiDB02} or discrete Bayesian networks \citep{KolDF09}. Maximum a posteriori estimation was used in the setting of non-linear models with Gaussian noise by \citet{ImoSA02}. Two other score-based methods have recently been proposed: for the parametric setting of Gaussian and linear models with same error variances \citep{PetJB12} and for linear models with non-Gaussian noise \citep{HyvAS13}. Most closely related to this paper is an approach from \citet{BuePA13}. They consider a semi-parametric structural equation model with additive, nonlinear functions in the parental variables and additive Gaussian noise, and they prove consistency and present an algorithm for cases with potentially many variables. In contrast, we consider here a model with a nonparametric specification of the error distribution (while the focus is on cases with few variables only). Thus, our model is more general but harder to estimate from data. We propose a penalized maximum likelihood method and prove its asymptotic consistency for finding the true underlying graph provided some technical assumptions about the class of probability densities hold. Our nonparametric setting also includes the well-known LiNGAM model \citep{ShiSA06} as a special case, and thus we provide here a score-based approach for LiNGAM. Independent work by \citet{KpoSA13} considers a similar problem as ours: however, while they only treat the case with two variables, we allow for more realistic multivariate settings.

This paper is organized as follows: In Section~2 we review the basic notation and definitions we will use later on before describing our method. In Section~3 we present our main theorem and the assumptions for proving consistency in the large sample limit. In Section~4 we discuss simulation results showing that the method works in practice under controlled conditions. In Section~5, we test our method on some real-world datasets and compare it to other causal inference methods.

\section{The Method}
Suppose data is sampled from real-valued random variables $X_1, \ldots, X_d$, which have some causal structure. We are interested in finding this causal structure (in the form of a directed acyclic graph) just by using observational data. Before we describe our method and the assumptions it rests on, we will give definitions of some of the basic terms used in this paper (some of which can be found in e.g.\ \citet{LauSL96}, \citet{PeaJ00}, \citet{TriH83}).

\subsection{Notation and Definitions}
Given a set of vertices $\mathcal{V}=\{ 1, \ldots, d \}$ and edges $\mathcal{E} \subset \mathcal{V} \times \mathcal{V}$, we define the $d$-dimensional \emph{graph} $G$ as the ordered pair $(\mathcal{V}, \mathcal{E})$. If $\mathcal{E}$ is asymmetric, $G$ is called a \emph{directed graph}. Given two vertices $\alpha, \beta \in \mathcal{V}$, a \emph{directed path} of length $n$ from $\alpha$ to $\beta$ is a sequence of vertices $\alpha=v_0, \ldots, v_n=\beta$, s.t.\ $(v_i, v_{i+1}) \in \mathcal{E} ~ \forall i=0,\ldots,n-1$. If $G$ is directed and for all $v \in \mathcal{V}$ there is no path of length $n \ge 1$ from $v$ to itself, then $G$ is called a \emph{directed acyclic graph (DAG)}. If $\mathcal{V}' \subseteq \mathcal{V}$ and $\mathcal{E}' \subseteq \mathcal{E}|_{\mathcal{V}' \times \mathcal{V}'}$, then $G' = (\mathcal{V}', \mathcal{E}')$ is called a \emph{subgraph} of $G$, and we write $G' \subseteq G$. If $\mathcal{E}' \subset \mathcal{E}|_{\mathcal{V}' \times \mathcal{V}'}$, we call $G'$ a \emph{proper} subgraph of $G$ and write $G' \subset G$. In a graph $G$ we define the \emph{parents} of a vertex $v$ as the set $\mathrm{pa}_G (v) := \{ u \in \mathcal{V}: (u, v) \in \mathcal{E} \}$. The structural Hamming distance (SHD) between two graphs $G, G'$ is defined as the number of single edge operations (edge additions, deletions, reversals) necessary to transform $G$ into $G'$.

A joint density $p$ over $X_1, \ldots, X_d$ is \emph{Markov} with respect to a DAG $D$, if it factorizes along $D$:
\begin{align}
  \label{cond:markov}
  p(x_1, \ldots, x_d) = \prod_{k=1}^d p \left( x_k | \{ x_l \}_{l \in \mathrm{pa}_D(k)} \right).
\end{align}
A DAG $D$ is \emph{causally minimal} with respect to a joint density $p$, if $\nexists D' \subset D$ s.t.\ $p$ is Markov with respect to $D'$.

A \emph{structural equation model (SEM)} $M = \{f_k, p_{\epsilon_k}\}_{k=1,\ldots,d}$ is a set of functions $f_k$ and densities $p_{\epsilon_k}$, specifying each variable $X_k$ as a function of some of the other variables and a noise term $\epsilon_k$ (independent of the other noise terms) with density $p_{\epsilon_k}$. The model $M$ induces a DAG $D$, where a directed edge $(k, l)$ is added if the function for $X_l$ directly depends on $X_k$. We will assume in this paper, that $M$ is \emph{recursive}, i.e. its graph $D$ is actually a DAG. We can write the model equations as
\begin{align*}
  X_k = f_k( \{ X_l \}_{l \in \mathrm{pa}_D(k)}), \epsilon_k), \qquad k=1, \ldots d.
\end{align*}
If the functions are additive in the noise, i.e.\ if
\begin{align}
  \label{eq:anm}
  X_k = f_k( \{ X_l \}_{l \in \mathrm{pa}_D(k)}) ) + \epsilon_k, \qquad k=1, \ldots, d,
\end{align}
the model is called an \emph{additive noise model (ANM)}. We call $\mathcal{M} := (\mathcal{F}, \mathcal{P}^\epsilon)$ a \emph{functional model class}\footnote{Here we implicitly assume that the model has additive noise.} of dimension $d$ if $\mathcal{F} \subset C^0 (\mathbb{R}^{d-1})$ is a class of functions containing the possible edge functions $f_k$ and $\mathcal{P}^\epsilon$ is a class of univariate probability densities containing the possible error densities $p_{\epsilon_k}$.

The joint density of an ANM is of the form \eqref{cond:markov} and thus Markov to its DAG $D$. Vice versa we say that $D$ induces a class of joint densities $\mathcal{P}$ on $X_1, \ldots, X_d$ from a functional model class $\mathcal{M}$, where
\begin{align}
  \label{defn:induced}
  \mathcal{P} = \left\{ \prod_{k=1}^d p_k \left( x_k - f_k ( \{ x_l \}_{l \in \mathrm{pa}_D(k)}) ) \right) : f_k \in \mathcal{F}, p_k \in \mathcal{P}^\epsilon \right\}.
\end{align}
Thus $\mathcal{P}$ contains all joint densities that can be generated by ANMs from class $\mathcal{M}$ with DAG $D$. The class $\mathcal{M}$ is said to be \emph{identifiable}, if the intersection of any two density classes $\mathcal{P}^1, \mathcal{P}^2$ induced by distinct graphs $D_1, D_2$ only contains densities for which there exists a unique graph that is causally minimal. We assume throughout the paper that the data generating process is an ANM with associated causally minimal DAG $D_0$ with induced density class $\mathcal{P}^0$ and true joint density $p^0 \in \mathcal{P}^0$. Causal minimality here essentially means that every edge in $D$ creates a dependency in the joint distribution (i.e.\ there is an edge from $X_l$ to $X_k$ only if $f_k$ is not constant in $x_l$).

For the density class, we often consider the weighted \emph{Sobolev space} of functions $W_r^s(\mathbb{R}^n, \langle \cdot \rangle^\beta)$ which is defined as follows:
\begin{align*}
  W_r^s (\mathbb{R}^n, \langle \cdot \rangle^\beta) := \left\{ f \in L^r (\mathbb{R}^n) : D^\alpha (f\cdot \langle \cdot \rangle^\beta) \in L^r (\mathbb{R}^n) ~ \forall |\alpha | \le s \right\},
\end{align*}
where $\langle x \rangle^\beta = (1+\|x\|^2)^{\beta/2}$ is a polynomial weighting function parametrized by $\beta \in \mathbb{R}$, $D^\alpha$ is the partial derivative operator according to the multi-index $\alpha$, and $r$, $s$ are integers at least 1. Note that for $\beta=0$ this is the usual Sobolev space, while for $\beta>0$ this is more restrictive (as the tails get bigger weights), and for $\beta < 0$ it is less restrictive. We will mostly be interested in the $\beta < 0$ case.

\sloppy
\subsection{Penalized maximum likelihood estimation}

\label{sec:method}
We now describe our method to learn the true causal structure from data. Suppose we measure $d$ variables, and we have $n$ i.i.d.\ samples $\{ x_k^j \}$ with $j=1, \ldots, n$ and $k=1, \ldots, d$. Let $D_1, \ldots, D_{N}$ be the candidate DAGs under consideration\footnote{E.g.\ all DAGs with $d$ nodes.} and $\mathcal{P}^1, \ldots, \mathcal{P}^{N}$ their induced density classes for some model class $\mathcal{M}$. If $\mathcal{M}$ is identifiable, we aim to infer the true DAG $D_0$ by finding the density class $\mathcal{P}^0$ that contains the true joint density $p^0$ (if there is more than one such class, we choose the one corresponding to the smallest graph). Of course, we do not know $p^0$---instead we estimate it by computing ``best representatives'' $\hat{p}_n^i$ from each class $\mathcal{P}^i$. These are chosen via nonparametric maximum likelihood:
\begin{align*}
  \hat{p}_n^i = \argmax_{p \in \mathcal{P}^i} \sum_{j=1}^n \log p (x_1^j, \ldots, x_d^j).
\end{align*}
Then, each model is scored with a penalized log-likelihood:
\begin{align} \label{def:score}
   S_n^i = \frac{1}{n} \sum_{j=1}^n \log \hat{p}_n^i (x_1^j, \ldots, x_d^j) - \#(\mathrm{edges})_i \cdot a_n,
\end{align}
where $a_n$ controls the strength of the penalty. Taking the maximum over these scores we get the estimator
\begin{align*}
  \hat{D}_n = D_{\hat{I}_n}, \quad \mathrm{where} \quad \hat{I}_n = \argmax_{i=1,\ldots,N} S_n^i.
\end{align*}
Hence the estimated DAG is $D_{\hat{I}}$. We will show in Section~\ref{chapter:theory} that this procedure is consistent for $a_n$ proportional to $1/\log n$ and that therefore $\hat{D}_n=D_0$ in the large sample limit.

The question arises how to find the maximum likelihood estimators $\hat{p}_n^i$ in each class in this nonparametric setting. We present here an exemplary procedure that has proved useful in practice. To estimate the edge functions of the SEM, we employ a nonparametric regression method. The error densities are then inferred from the residuals using a density estimation method. The estimated joint density is finally given by the product of the residual densities, in accordance with \eqref{defn:induced}.

This gives the following three-step procedure for each DAG $D_i$:
\begin{enumerate}
  \item For each node $k$ estimate the residuals $\hat{\epsilon}_k$ by nonparametrically regressing $X_k$ on $\{ X_l \}_{l \in \mathrm{pa}_{D_i(k)}}$. If $\mathrm{pa}_{D_i(k)} = \varnothing$, set $\hat{\epsilon}_k = x_k$.
  \item For each node $k$ estimate the residual densities $\hat{p}_{\epsilon_k}$ from the estimated residuals $\hat{\epsilon}_k$.
  \item Compute the penalized likelihood score
    \begin{align*}
      S_n^i= \frac{1}{n} \sum_{j=1}^n \sum_{k=1}^d \log \hat{p}_{\epsilon_k} (\hat{\epsilon}^j_k) - \#(\mathrm{edges})_i \cdot a_n.
    \end{align*}
\end{enumerate}
Of course, an exhaustive search over all DAGs is only feasible for small values of $d$, since the number of DAGs grows super-exponentially with the number of vertices\footnote{The first few values of the number of DAGs $N(d)$ with $d$ nodes are $N(2)=3$, $N(3)=25$, $N(4)=543$, $N(5)=29281$, $N(6)=3781503$, for example.} and nonparametric regression in $d$ dimensions is ill-posed in general without making structural constraints, due to the curse of dimensionality\footnote{The latter problem can be dealt with in certain cases, e.g.\ additive models, where the edge functions are additive in the parental variables.}. The methods used in steps 1 and 2 should be chosen depending on the model class $\mathcal{M}$. Examples are (generalized) additive model regression (GAM) for step 1 and kernel density estimation for step 2.

As an illustration we look at the two-dimensional case, where there are only two variables $X_1$ and $X_2$. There are three DAGs inducing the following models:
\begin{align*}
  \begin{array}{ll}
    D_1: & X_1 \longrightarrow X_2 \\
    & X_1 = \epsilon_1 \\
    & X_2 = f(X_1) + \epsilon_2 \\
    & p_1 (x_1,x_2) = p_{X_1} (x) \cdot p_{X_2|X_1} (x_2|x_1) = p_{\epsilon_1} (x_1) \cdot p_{\epsilon_2} (x_2-f(x_1)) \\
    & \\
    D_2: & X_1 \longleftarrow X_2 \\
    & X_1 = g(X_2) + \epsilon_1 \\
    & X_2 = \epsilon_2 \\
    & p_2 (x_1,x_2) = p_{X_1|X_2} (x_1|x_2) \cdot p_{X_2} (x_2) = p_{\epsilon_1} (x_1-g(x_2)) \cdot p_{\epsilon_2} (x_2) \\
    & \\
    D_3: & X_1 \ci X_2 \\
    & X_1 = \epsilon_1 \\
    & X_2 = \epsilon_2 \\
    & p_3 (x_1,x_2) = p_{X_1} (x_1) \cdot p_{X_2} (x_2) = p_{\epsilon_1} (x_1) \cdot p_{\epsilon_2} (x_2)
  \end{array}
\end{align*}
We do steps 1, 2, and 3 as described above and choose the model with the highest (log-)likelihood penalized likelihood score.

Comparing this score-based approach with independence-test-based methods, the main difference occurs at step 2, where we estimate the residual densities instead of testing their independence. In terms of complexity, we swap one $d$-dimensional independence test againt $d$ univariate density estimations. Simulations show that this is faster by a factor on the order of 100 with current implementations. However, even though we do not test residual independence directly, it is still the discriminatory property by which to identify the true model. By constructing the densities according to \eqref{defn:induced}, we enforce the error terms to be independent in the estimated joint density. If they are not actually, the considered model will obtain a poor score. Thus, we are searching for the best fitting densities where the errors are independent.

\section{Theoretical Results}
\label{chapter:theory}
We now show that our method is consistent, i.e.\ that it will identify the true underlying DAG given enough samples. In the following $\mathcal{P}_D$ denotes the induced density class of DAG $D$. We make the following assumptions:
\begin{enumerate}
  \renewcommand{\theenumi}{(A\arabic{enumi})}
  \renewcommand{\labelenumi}{\theenumi}

  \item \label{ass:anm}
    Identifiability: The data $\{ x_k^j \}_{\substack{k=1, \ldots, d \\ j=1, \ldots, n}}$ are i.i.d.\ realizations (over $j = 1, \ldots, n$) of an identifiable structural equation model with induced $d$-dimensional DAG $D_0$. In particular, the SEM can be the additive noise model \eqref{eq:anm} with nonlinear edge functions $f_k$ or non-Gaussian noise variables\footnote{Excluding a set of exceptions of measure zero \citep[Theorem 1]{HoyPA09}.} $\epsilon_k$ for all $k = 1, \ldots, d$ \citep[Lemma 1]{PetJ11}. There are no hidden variables, i.e.\ the noise terms are jointly independent.

  \item \label{ass:minimality}
    Causal Minimality: There is no proper subgraph $D'$ of $D_0$, s.t.\ $p^0$ is Markov with respect to $D'$.

  \item \label{ass:sobolev}
    Smoothness of log-densities: For all DAGs $D$ the log-densities of $\mathcal{P}_D$ (restricted to their respective support) are elements of a bounded weighted Sobolev space. That is $\exists r \ge 1$, $s > d$, $\beta < 0$, $C > 0$ s.t.\
    \begin{align*}
      \sum_{|\alpha| \le s} \| D^\alpha ( \langle \cdot \rangle^\beta \cdot \mathbf{1} \{ p > 0 \} \cdot \log p ) \|_r < C \quad \forall p \in \mathcal{P}_D,
    \end{align*}
    where $\| \cdot \|_r$ is the usual $L^r$-norm.

  \item \label{ass:moment}
    Moment condition for densities: For all DAGs $D$ we have
    \begin{align*}
      \exists \gamma > s - d/r \quad \mathrm{s.t.} \quad \| p \cdot \langle \cdot \rangle^{\gamma-\beta} \|_r < \infty \quad \forall p \in \mathcal{P}_D,
    \end{align*}
    where $r, s, d,$ and $\beta$ are determined by~\ref{ass:sobolev}.

  \item \label{ass:variance}
    Uniformly bounded variance of log-densities: For all DAGs $D$ we have
    \begin{align*}
      \forall p^0 \in \mathcal{P}_D ~ \exists K > 0 \quad \mathrm{s.t.} \quad \sup_{p \in \mathcal{P}_D} var_{p^0} ( \log p(X_1, \ldots, X_d)) < K.
    \end{align*}

  \item \label{ass:closedness}
    Closedness of density classes: For all DAGs $D$ the induced density class $\mathcal{P}_D$ is a closed set, with the topology given by the Kullback-Leibler (KL) divergence $D_\mathrm{KL} (p(\mathbf{x}) || q(\mathbf{x})) = \int p(\mathbf{x}) \log \frac{p(\mathbf{x})}{q(\mathbf{x})} \iend{\mathbf{x}}$.

\end{enumerate}

The first two assumptions concern the general model setup and ensure identifiability (i.e.\ non-overlapping induced density classes). \ref{ass:anm}~requires the data to come from an identifiable ANM due to nonlinearity or non-Gaussianity, as in \citet{HoyPA09}. \ref{ass:minimality}~ensures there are no ``superfluous'' edges in the true DAG, i.e.\ the true model is the most parsimonious fitting the data.

The last four assumptions are technical and used to prove consistency of the penalized maximum likelihood estimator. \ref{ass:sobolev}~essentially requires the log-densities to be smooth. \ref{ass:moment}~requires the densities to have some (at least fractional) finite moments. \ref{ass:variance}~requires the log-densities, for every underlying density $p^0$, to have uniformly bounded second moments. Finally, \ref{ass:closedness}~guarantees the existence of the maximizers of the likelihood and the negative information entropy in each class. Furthermore, it is needed to ensure the true density $p^0$ has positive KL distance from all wrong density classes. Note that the latter statement alone would suffice to show consistency, since all statements can be written in terms of the supremums of likelihood and negative entropy, instead of their actual maximizers. However, for better comprehensability we chose the present formulation with the slightly stronger assumption.

Making these assumptions, the penalized maximum likelihood estimator is consistent. We show this by proving that the probability of the true model obtaining a smaller score than any other model vanishes in the large sample limit.

\begin{theorem} \label{thm:plconsistency}
  Assume \ref{ass:anm}--\ref{ass:closedness}. Let $S^i_n$ be the penalized likelihood score of DAG $D_i$, given by
  \begin{align*}
    S^i_n = \frac{1}{n} \sum_{j=1}^n \log \hat{p}_n^i (x_1^j, \ldots, x_d^j) - \# (\mathrm{edges})_i \cdot a_n,
  \end{align*}
  where $\# (\mathrm{edges})_i$ is the number of edges in DAG $D_i$, and $a_n = 1 / \log n$. Denote by $i_0$ the index of the true DAG $D_0$ = $D_{i_0}$. Then we have
  \begin{align*}
    P \left( S^{i_0}_n \le S^i_n \right) \rightarrow 0 \quad \mathrm{as} \quad n \rightarrow \infty \qquad \forall i \neq i_0.
  \end{align*}
\end{theorem}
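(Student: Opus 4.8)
The plan is to prove consistency by splitting the comparison into two regimes depending on whether $D_i$ is a ``wrong'' model (one whose induced density class does not contain $p^0$) or a model that still contains $p^0$ but has more edges than $D_0$. For any DAG $D_i$, write the score difference as
\begin{align*}
  S^{i_0}_n - S^i_n = \left( \frac{1}{n} \sum_{j=1}^n \log \hat{p}_n^{i_0}(\mathbf{x}^j) - \frac{1}{n} \sum_{j=1}^n \log \hat{p}_n^{i}(\mathbf{x}^j) \right) - a_n \left( \#(\mathrm{edges})_{i_0} - \#(\mathrm{edges})_i \right),
\end{align*}
so that $P(S^{i_0}_n \le S^i_n)$ vanishes precisely when the bracketed likelihood gap stays bounded below (in probability) by something dominating the penalty gap. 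First I would establish that the empirical log-likelihood $\frac{1}{n}\sum_j \log \hat{p}_n^i(\mathbf{x}^j)$ converges in probability to the maximal expected log-likelihood $\sup_{p \in \mathcal{P}_i} E_{p^0}[\log p(X_1,\ldots,X_d)]$, equivalently to $-H(p^0) - \inf_{p \in \mathcal{P}_i} D_{\mathrm{KL}}(p^0 \| p)$, where $H$ is the differential entropy of $p^0$. This is where the technical assumptions do their work: I would use \ref{ass:sobolev} to control the complexity (e.g.\ metric entropy / bracketing numbers) of the log-density class, \ref{ass:moment} to handle the tails so that the expectations are well-defined and the weighted Sobolev embedding into a Glivenko--Cantelli-type class holds, and \ref{ass:variance} to bound fluctuations so as to obtain a uniform law of large numbers over each $\mathcal{P}_i$. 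The existence and attainment of the relevant suprema is guaranteed by the KL-closedness assumption \ref{ass:closedness}.

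With the uniform convergence in hand, the two regimes are treated separately. In the wrong-model regime, identifiability \ref{ass:anm} together with \ref{ass:closedness} guarantees that $p^0 \notin \mathcal{P}_i$ is at strictly positive KL distance from the closed class, i.e.\ $\delta_i := \inf_{p \in \mathcal{P}_i} D_{\mathrm{KL}}(p^0 \| p) > 0$. Hence the population log-likelihood gap $S^{i_0}_n - S^i_n$ converges to $\delta_i > 0$, while the penalty term $a_n \cdot (\#\mathrm{edges}_{i_0} - \#\mathrm{edges}_i) \to 0$ since $a_n = 1/\log n \to 0$; a fixed positive gap overwhelms a vanishing penalty, so $P(S^{i_0}_n \le S^i_n) \to 0$. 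In the correct-but-larger regime (where $p^0 \in \mathcal{P}_i$ but $D_i$ has strictly more edges than $D_0$, forced by causal minimality \ref{ass:minimality}), the two empirical log-likelihoods both converge to the same limit $-H(p^0)$, so their difference is $o_P(1)$; but now the penalty difference $a_n \cdot (\#\mathrm{edges}_{i_0} - \#\mathrm{edges}_i)$ is \emph{negative} (since $D_0$ has fewer edges), which drives $S^{i_0}_n - S^i_n$ to be positive with probability tending to one, again giving $P(S^{i_0}_n \le S^i_n) \to 0$.

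The delicate balance in this second regime is the crux, and I expect it to be the main obstacle. One must show that the empirical log-likelihood gap, although converging to zero in the mean, does so at a rate faster than the penalty $a_n = 1/\log n$ can close it; equivalently, the stochastic fluctuation of the two maximized likelihoods around their common limit must be $o_P(a_n)$, or at least not overwhelm the penalty. This requires a careful rate analysis of the nonparametric maximum likelihood estimators $\hat{p}_n^i$ within the Sobolev class, quantifying how fast $\frac{1}{n}\sum_j \log \hat{p}_n^i(\mathbf{x}^j) \to -H(p^0)$; the choice $a_n \propto 1/\log n$ is presumably calibrated precisely so that it decays slowly enough to dominate this stochastic noise yet fast enough to be beaten by the fixed positive KL gap $\delta_i$ in the wrong-model regime. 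Establishing the uniform convergence of Sobolev-constrained log-likelihoods with explicit enough control to certify $o_P(1/\log n)$ fluctuations—most plausibly via a bracketing-entropy bound for the weighted Sobolev ball combined with the moment and variance conditions—is the technical heart of the argument, after which a union bound over the finitely many candidate DAGs $D_i$, $i \neq i_0$, completes the proof.
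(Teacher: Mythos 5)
Your proposal takes essentially the same route as the paper's own proof: the identical two-case split ($p^0 \in \mathcal{P}_i$ with strictly more edges, forced by identifiability and causal minimality, versus $p^0 \notin \mathcal{P}_i$ at fixed positive KL distance guaranteed by closedness), the same use of a vanishing penalty against a fixed entropy gap in the wrong-model case, and the same identification of the crux---that the maximized log-likelihoods must concentrate around the minimal-entropy values at a rate not slower than $a_n = 1/\log n$, to be obtained from a bracketing-entropy bound for the weighted Sobolev class together with the moment and variance conditions. The ``technical heart'' you defer is exactly what the paper's Lemma~\ref{thm:ulln} (Chebyshev plus a union bound over the $\exp(\epsilon^{-d/s})$ brackets, with $d/s<1$ yielding the rate $c/\log n$ for any $c>0$), Lemma~\ref{thm:entropy} (the Nickl--P\"otscher entropy bound), and Lemma~\ref{thm:entropyconv} supply, so your plan is a faithful blueprint of the actual argument.
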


The proof relies on entropy methods and is presented in the appendix. In practice the $1/\log n$ penalty rate might be too large. We used $a_n=1/\sqrt{n}$ for some simulations in Section~\ref{chapter:numerical} (where the noise is Gaussian), which lead to reasonably good performance for finite sample size $n=300$. Moreover, under stronger assumptions we have:

\begin{remark} \label{remark:rate}
  When replacing \ref{ass:variance} with the stronger assumption of sub-Exponential tails of $\log p(X_1, \ldots, X_d)$, we can improve the penalty rate $a_n$ in Theorem~\ref{thm:plconsistency} from $1/ \log n$ to $c n^{-1/(2+d/s)}$, for some $c>0$ sufficiently large.
\end{remark}

\section{Numerical Results} \label{chapter:numerical}
In this section we present simulation results to show that our method works under controlled conditions. In each case, the data generating process is an additive noise model with acyclic graph structure. We first reproduce some results from an earlier paper by \citet{HoyPA09}, where the model involves just two variables and is parametrized by two parameters, controlling linearity and Gaussianity respectively. Then, we extend this setup to a slightly more general class of models. Finally, we look at cases with more than two variables.

In our implementation we use (generalized) additive model regression (GAM, see \citet{HasTT86}) or local polynomial regression (LOESS, see \citet{CleW79}) for step~1 and logspline density estimation (see \cite{KooCS91}) or kernel density estimation for step~2. For models with more than two variables, penalization becomes important. We used a factor of $a_n = 1 / \sqrt{n}$ instead of the very severe $1/ \log n$. This can be justified since in the relevant simulations the noise is Gaussian and the log-densities can be assumed to be sub-Exponential. In this case, the faster rate can be used (see Remark~\ref{remark:rate}). All computations were carried out in the statistical computing language \texttt{R} (using packages \texttt{mgcv} and \texttt{logspline}) and the code is available on request from the authors.

\subsection{Identifiability depending on Linearity and Gaussianity}
\label{sec:hoyer}
\citet{HoyPA09} illustrate their method with a two-dimensional ANM of the form
\begin{align*}
  X_1 &= \epsilon_1 \\
  X_2 &= X_1 + bX_1^3 + \epsilon_2
\end{align*}
with the parameter $b$ ranging from $-1$ to $1$, thus controlling the linearity of the model. The noise terms $\epsilon_1, \epsilon_2$ are transformed Normal random variables:
\begin{align*}
  \epsilon_k = \mathrm{sgn}(\nu_k) \cdot |\nu_k|^q, \quad \nu_k \stackrel{\mathrm{iid}}{\sim} \mathcal{N}(0,1),
\end{align*}
where the parameter $q$ ranges from $0.5$ to $2$ and thus controls Gaussianity. The true direction $M_1: X_1 \rightarrow X_2$ cannot be identified with traditional methods (e.g.\ the PC algorithm), since the backwards model $M_2: X_1 \leftarrow X_2$ entails precisely the same conditional independence relations (none) and thus belongs to the same Markov equivalence class. If $b=0$ and $q=1$ there exists a backwards model entailing the same joint density. As soon as we move away from this point, however, the model becomes identifiable \citep{HoyPA09}. We confirm this numerically, showing our method performs as expected in this setting.

We discretize the parameter space $(b,q) \in [-1,1] \times [0.5,2]$, and for each grid point we repeat the simulation 1000 times, with $n=300$ samples per trial. We then count the number of times the backwards model gets wrongly chosen by the method\footnote{I.e. when the likelihood score of the backwards model is lower than that of the forwards model.}, and this false decision rate serves as our measure of quality of the method. As can be seen in Figure \ref{fig:bqfull}, the false decision rate peaks around $(b,q) = (0,1)$ with around 50\% wrong decisions, corresponding to random guessing. Away from this region it quickly drops to zero. In this setting the regressions were done using LOESS and the density estimations using logsplines.

\begin{figure}
  \centering
  \begin{subfigure}{0.3\textwidth}
    \centering
    \includegraphics[width=\textwidth]{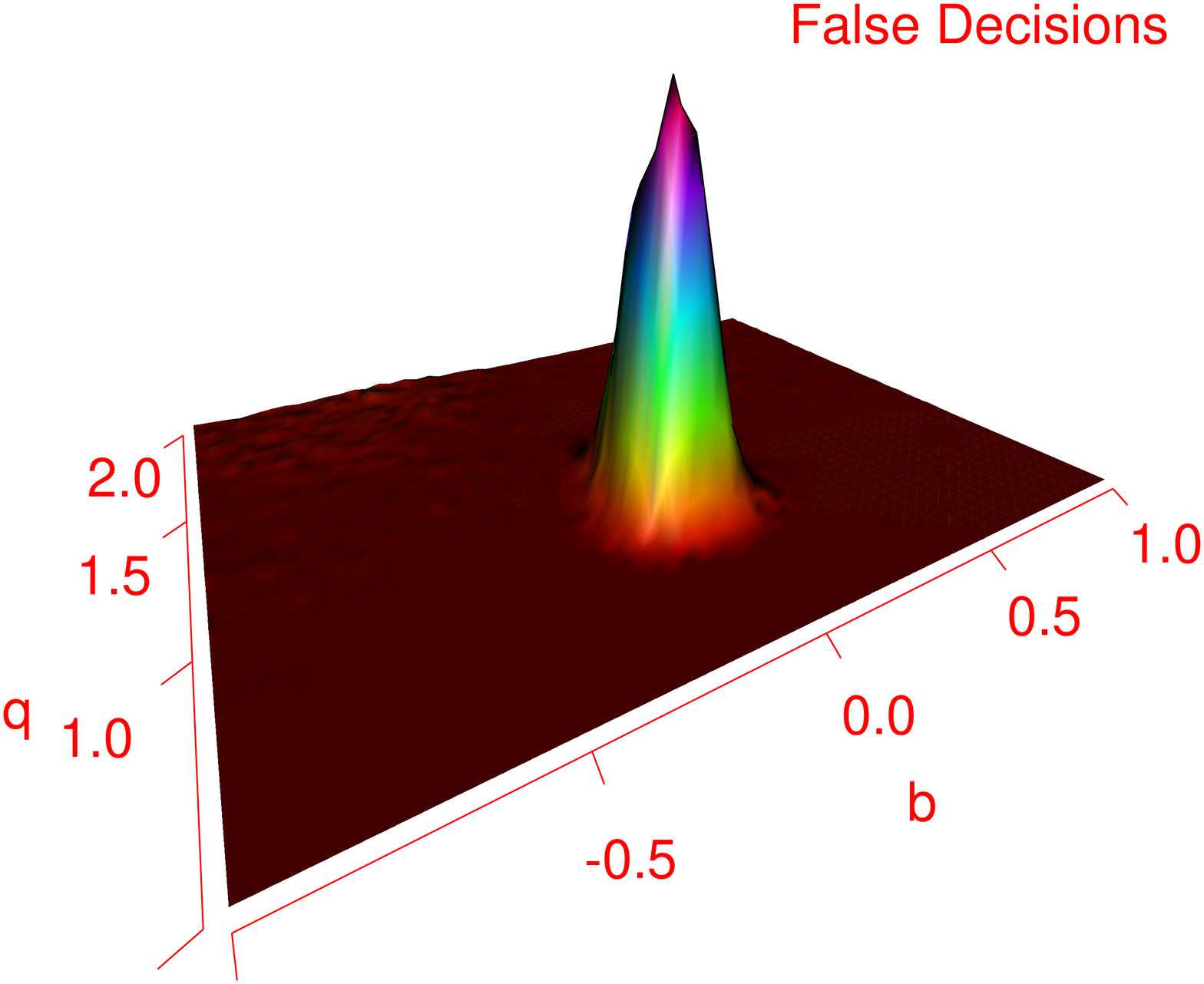}
    \caption{Full $b \times q$ grid.}
  \end{subfigure}
  \begin{subfigure}{0.3\textwidth}
    \centering
    \includegraphics[width=\textwidth]{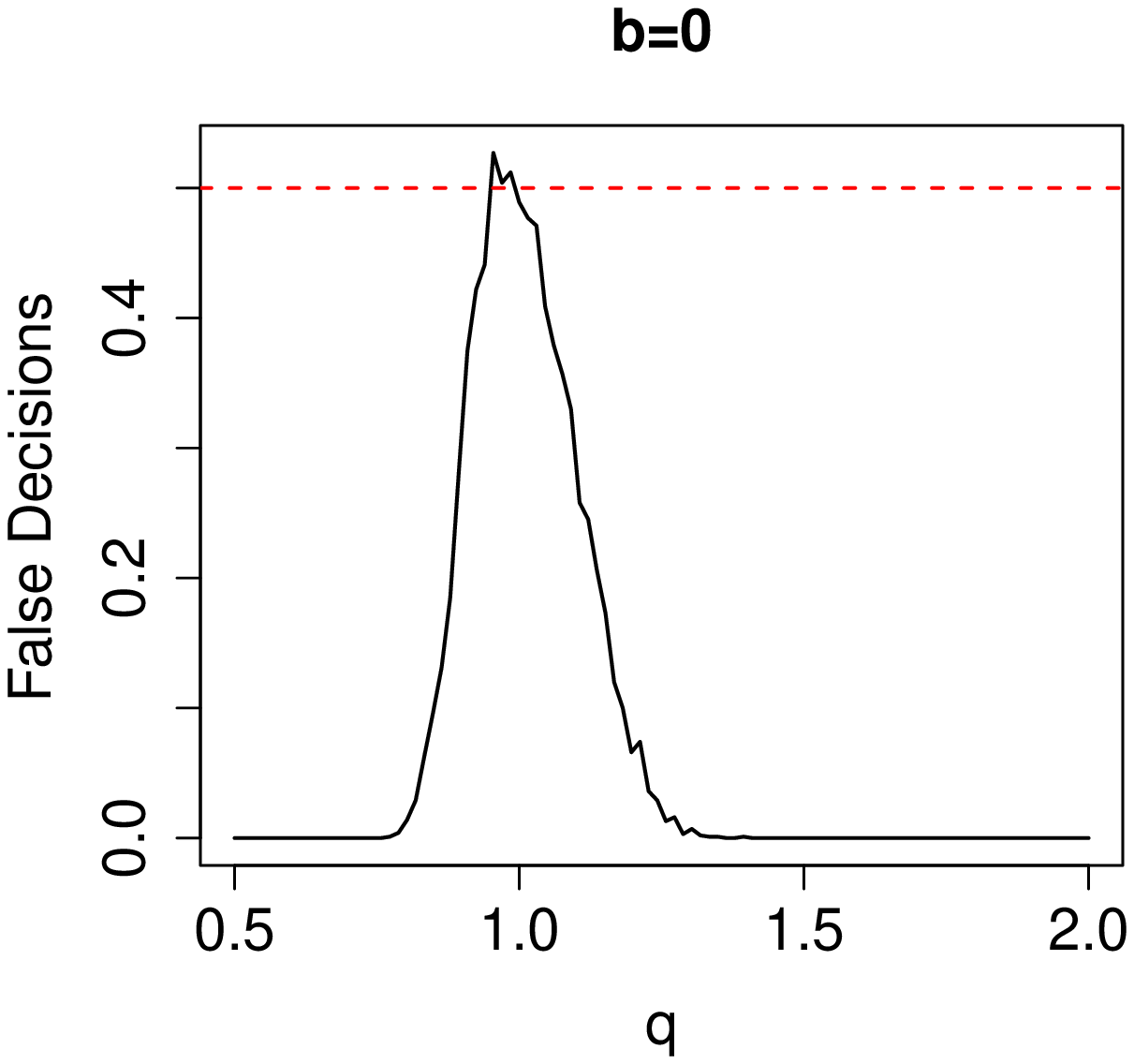}
    \caption{$b$ fixed.}
  \end{subfigure}
  \begin{subfigure}{0.3\textwidth}
    \centering
    \includegraphics[width=\textwidth]{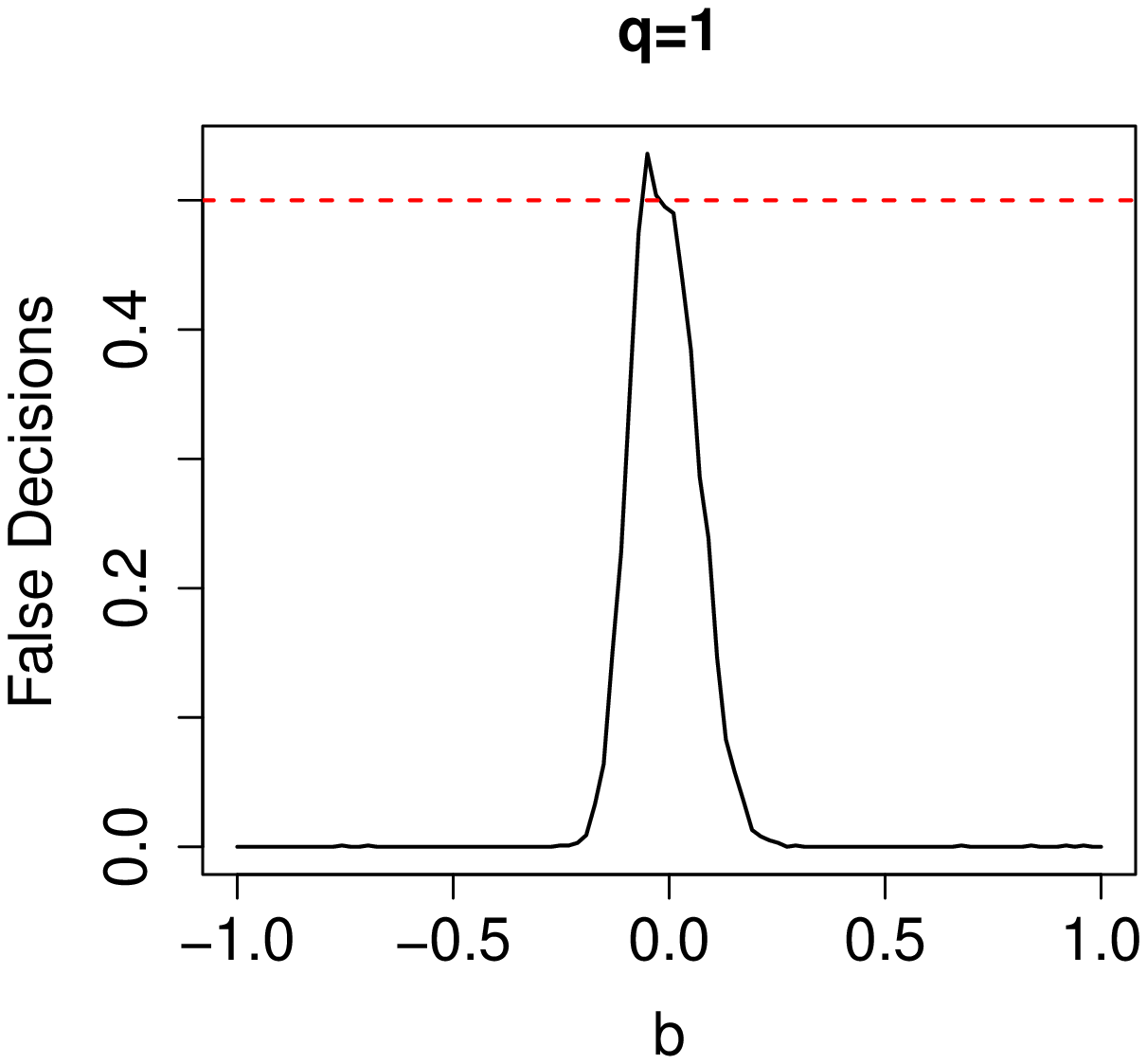}
    \caption{$q$ fixed.}
  \end{subfigure}
  \caption{False decision rates for a two-dimensional ANM with two parameters $b$ and $q$, controlling linearity and Gaussianity ($n=300$). For $b=0$ the model is linear, for $q=1$ the noise is Gaussian.}
  \label{fig:bqfull}
\end{figure}

\subsection{Random Edge Functions}
We now generalise the setup of the scenario from Section~\ref{sec:hoyer} in allowing a bigger function class for the edge function. Specifically, we randomly generate functions by sampling a random path from a Wiener process and smoothing it with cubic splines\footnote{A Wiener path (random normal increments) is sampled on a 1000 point grid spanning $[-1,1]$ and the resulting vector rescaled to an interval of length 2 and consequently smoothed using cubic splines. The resulting functions are linear outside [-1,1] and nonlinear inside.}. To measure their nonlinearity we use the normalised $L^2$-difference between the function and its best linear approximation on the interval $[-1,1]$, as described in \citet{EmaKK93}. A number of randomly generated functions with different nonlinearity values are shown in Figure~\ref{fig:remarginal3}. We again choose a uniform grid of nonlinearity values (in the interval $[0, 0.4]$) and, for each grid point, generate 100 random functions. With each function we perform 100 simulations and average the results. The noise is standard Gaussian in this setting. In Figure~\ref{fig:remarginal3} we see the results for a small sample ($n=300$) and a large sample ($n=1500$) case. The findings are analogous to the simple cubic model---the false decision rate decreases with nonlinearity of the edge function and sample size. Again, the regressions were done using LOESS and the density estimations using logsplines.

\begin{figure}
  \centering
  \begin{subfigure}{0.45\textwidth}
    \centering
    \includegraphics[width=\textwidth]{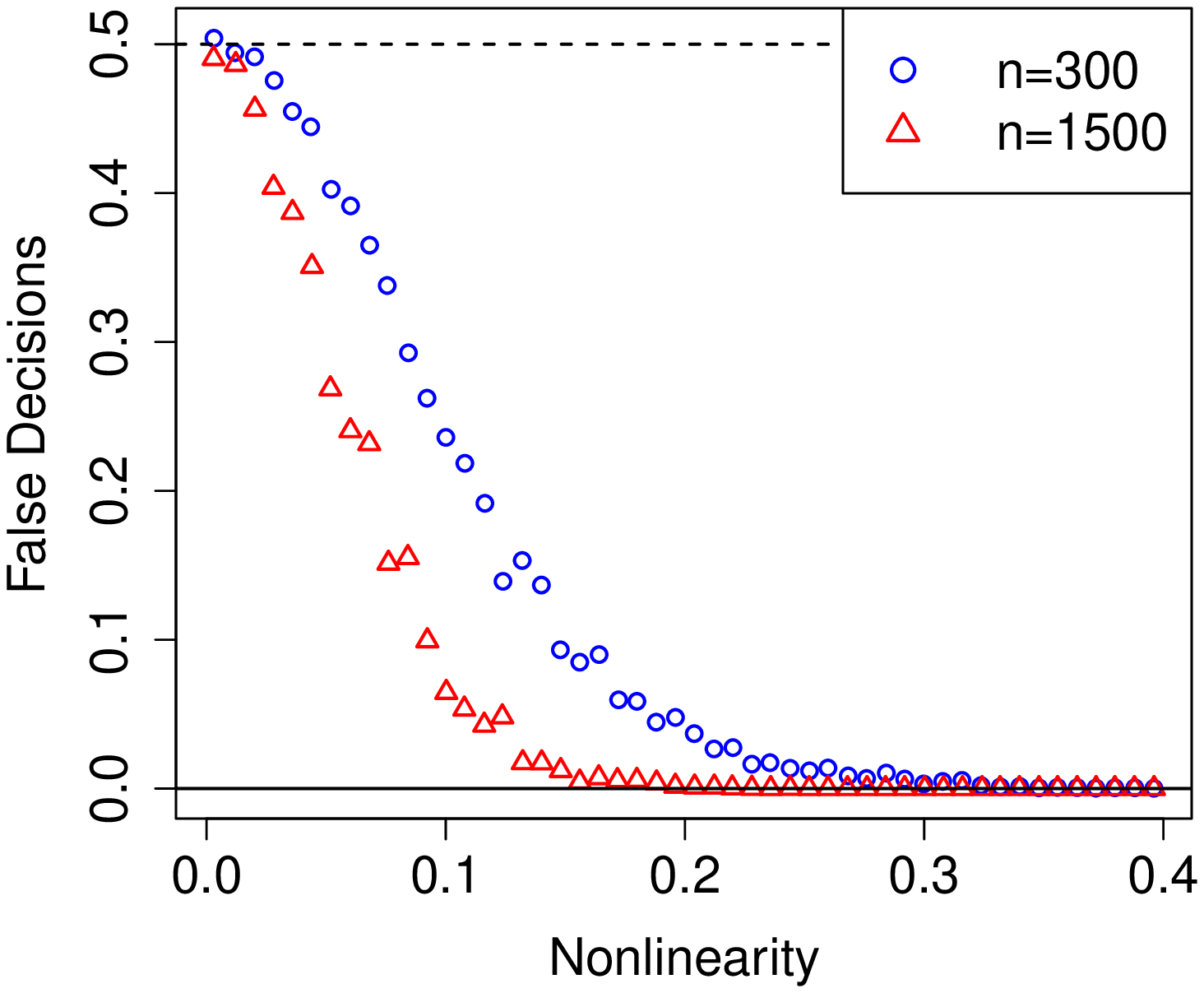}
    \caption{}
  \end{subfigure}
  \begin{subfigure}{0.45\textwidth}
    \centering
    \includegraphics[width=\textwidth]{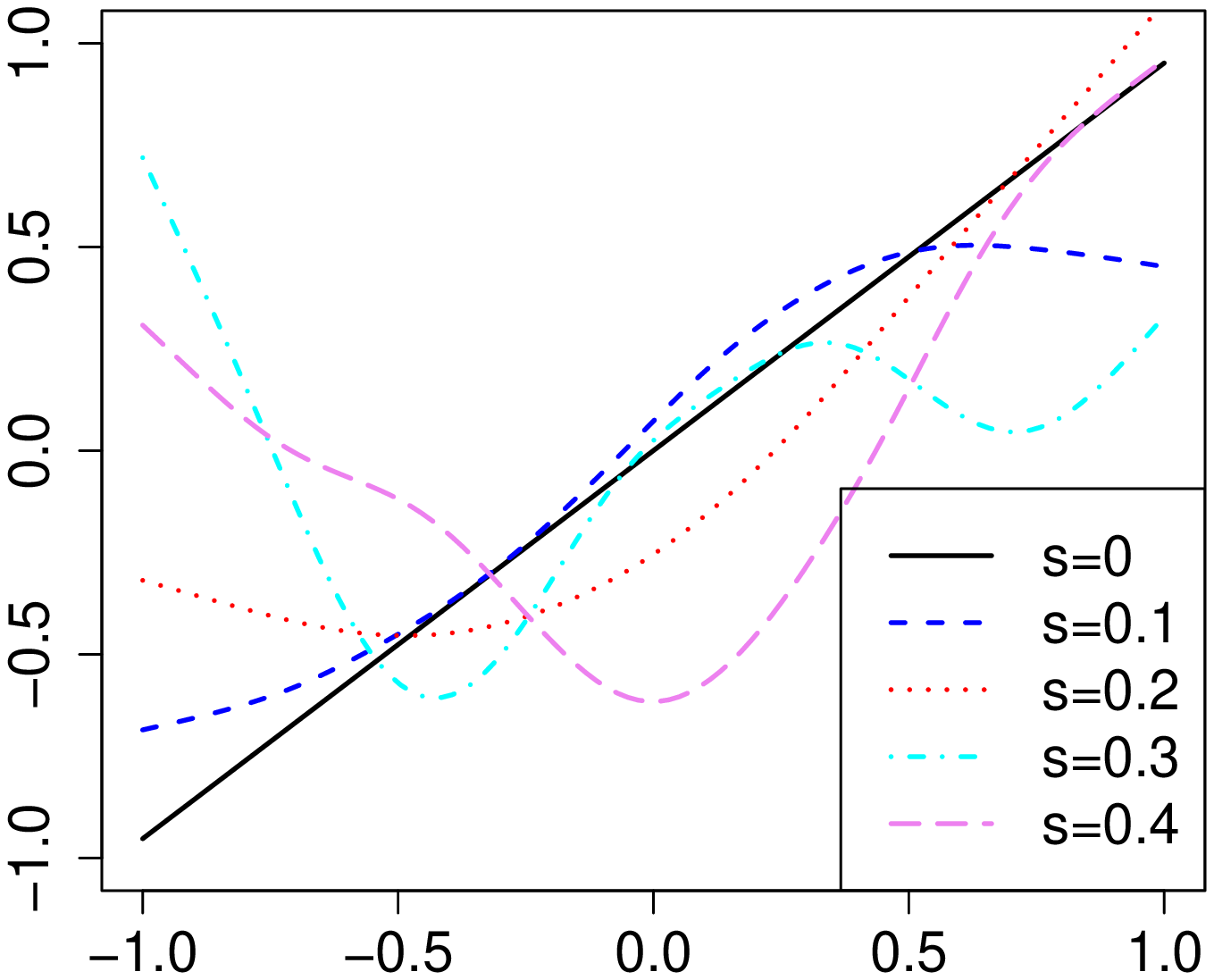}
    \caption{}
  \end{subfigure}
  \caption{a) False decision rates with randomly sampled edge functions and Gaussian noise decreases with nonlinearity of the functions. b) Examples of randomly generated functions, where parameter $s$ controls nonlinearity.}
  \label{fig:remarginal3}
\end{figure}

\subsection{Larger Networks and Thresholding}
\label{sec:thresh}
In a practical situation the reliability of any method invariably depends on whether its assumptions are met, as well as some other factors. In our case this would include the nonlinearity of the edge functions, the non-Gaussianity of the noise, the sample size, and the number of nodes. It would be desirable to have some criterion indicating there is insufficient information to make a decision. While this is hard to make concrete, a good first heuristic seems to be the separation of the best-scoring model from the rest. We concretely look at the ratio of the smallest ($\Delta_1$) and the largest ($\Delta_2$) score difference (see Figure~\ref{fig:thresh}). If this is smaller than some threshold $t$, we make no decision (no selection of a model).

The effect of this can be seen in Figure~\ref{fig:shdbic}. Starting from a full DAG with 3 nodes as the ground truth, we randomly generate 100 different sets of nonlinear\footnote{With nonlinearity values in $[0.39,0.4]$.} edge functions, and for each set of edge functions we generate 100 data sets with standard Gaussian noise of sample size $n=300$. With each data set we run an exhaustive search over all 25 candidate models and, if making a decision after thresholding, compute the structural Hamming distance (SHD) between the best-scoring DAG and the ground truth. Comparing the thresholds $t=0$ and $t=0.01$, the false decision rate falls from 3.9\% to 2.4\% while in 3.1\% of the cases no decision is made.

We also look at two simulation settings suggested in \citet{PetJ11}, where the graph consists of 4 nodes and the edge functions are nonlinear but parametrized by 4 and 5 parameters respectively. In both cases, \texttt{nonlinear1} and \texttt{nonlinear2}, 100 sets of parameters are drawn from a uniform distribution and then data (with a sample size of $n=400$) is generated. Our method identifies the correct DAG in 96 / 97 out of the 100 cases for \texttt{nonlinear1/2} (in the other cases, there is one additional edge). This certainly improves upon the results reported in \citet{PetJ11} (86 correct decision in both cases).

In all of these multivariate settings, we used GAM for regression and logsplines for density estimation.

\begin{figure}
  \centering
  \begin{subfigure}{0.45\textwidth}
    \centering
    \includegraphics[width=\textwidth]{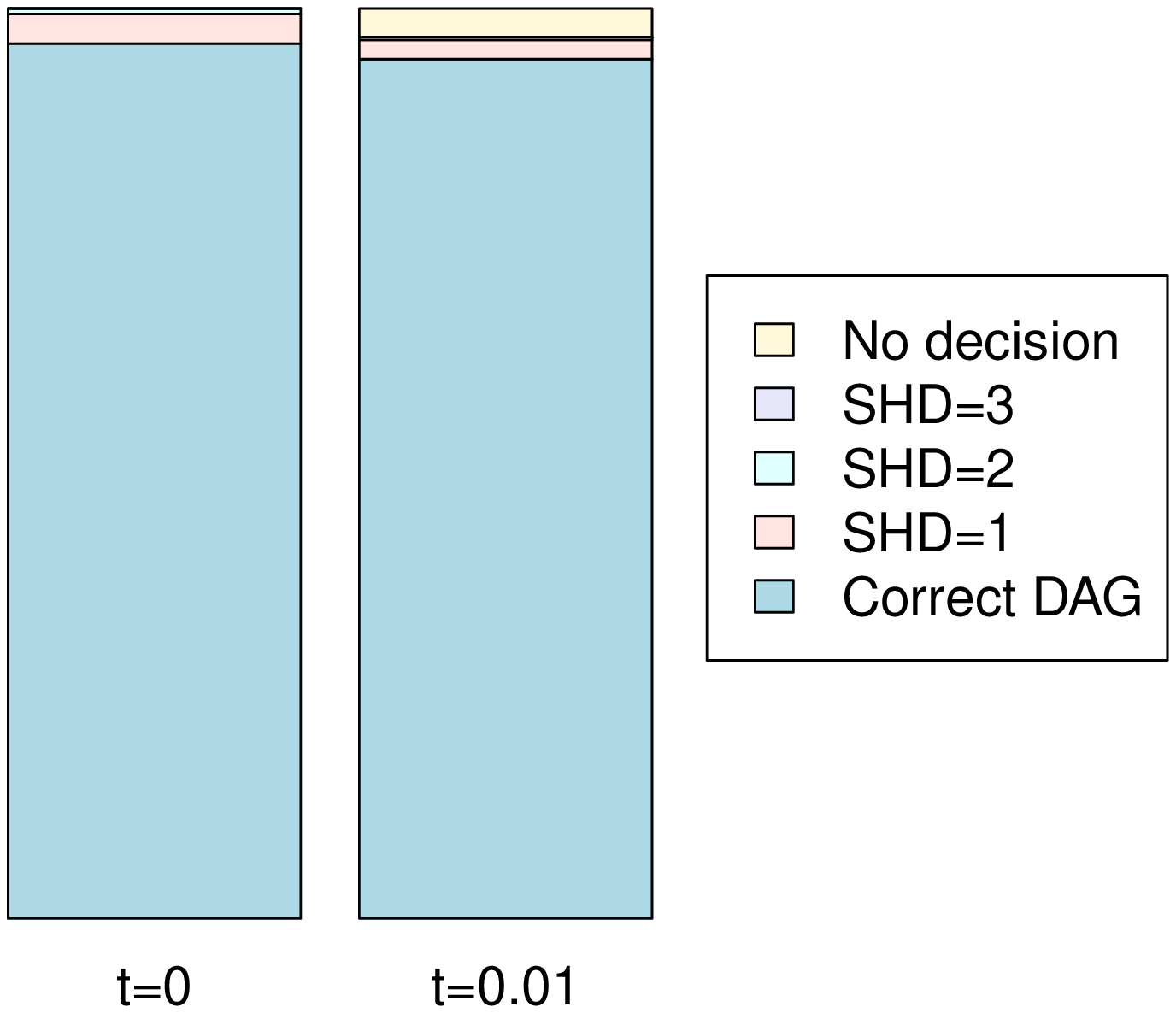}
    \caption{}
  \label{fig:shdbic}
  \end{subfigure}
  \begin{subfigure}{0.45\textwidth}
    \centering
    \includegraphics[width=\textwidth]{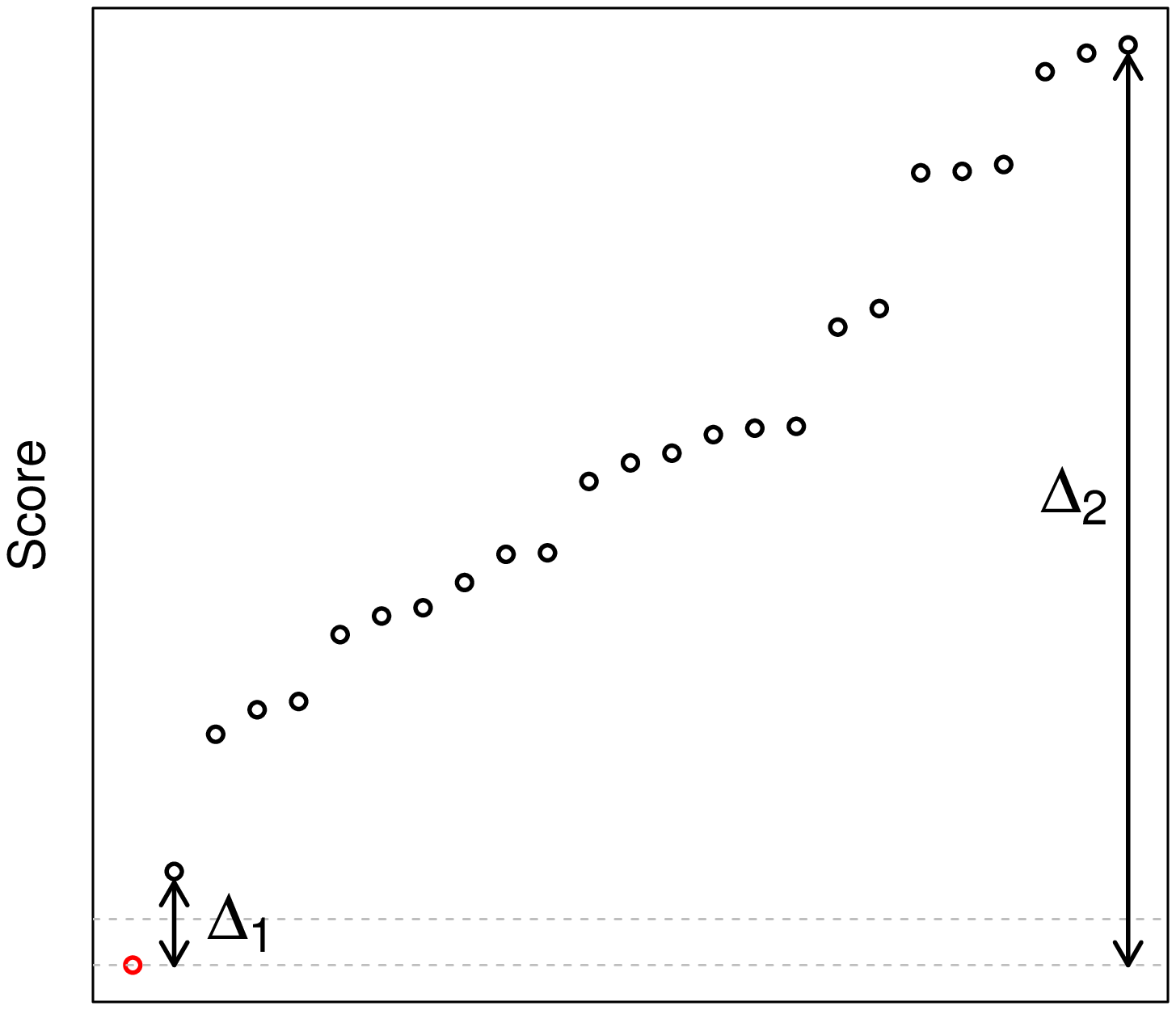}
    \caption{}
    \label{fig:thresh}
  \end{subfigure}
  \caption{a) Structural Hamming distance between the best-scoring DAG and the ground truth for a 3-node simulation with ($t=0.01$) and without ($t=0$) thresholding. b) Illustration of thresholding for a single simulation run. Let $s_1, \ldots, s_D$ be the (increasingly) ordered scores. Then $\Delta_1=s_1/s_2$ and $\Delta_2=s_1/s_N$.}
\end{figure}

\section{Real Data}
To determine the performance on real-world datasets, we apply our method to so-called cause-effect pairs. These are bivariate datasets where the true causal direction is known. An example would be the altitude and the average temperature of weather stations. \citet{MooJJ10} describe 8 such pairs and compare several methods that were submitted as part of the Causality Pot-Luck Challenge. Our method identifies 7 out of the 8 pairs correctly\footnote{This corresponds to a p-value of 0.0352 under the random guessing null hypothesis.}, thus beating all other compared methods except \citet{ZhaKH10}, who take into account post-nonlinear additive noise.

We next consider the extended collection of cause-effect pairs, which can be found at \url{http://webdav.tuebingen.mpg.de/cause-effect}. This currently comprises 86 datasets, 81 of which are bivariate. Using our method on these 81 bivariate datasets, we identify the true model in 66\% of the cases\footnote{This corresponds to a p-value of 0.005 under the random guessing null hypothesis.}. In \citet{JanDA12} a subset of these datasets were used to compare various causal inference methods. Running our method on those datasets, it compares well with the other methods (see Table~\ref{table:causeeffectpairs}), being slightly better than independence testing (AN) and outperforming the Lingam method.

In both of these settings we used LOESS and kernel density estimation.


\begin{table}
  \centering
  \begin{tabular}{l | c | c | c | c | c | c}
    Method & SCL & AN & Lingam & PNL & IGCI & GPI \\ \hline
    Accuracy & 66\% & 63\% & 58\% & 68\% & 75\% & 70\%
  \end{tabular}
  \caption{Success rates of different causal inference methods on cause-effect pairs at a decision rate of 100\%. SCL=Score-based Causal Learning (our method), AN=Additive Noise with independence testing, PNL=Post-Nonlinear, IGCI=Information-Geometric Causal Inference, GPI=Gaussian Process Inference. All values except SCL taken from \citet{JanDA12}. All datasets were subsampled three times (if $n>500$), and the results were averaged.}
  \label{table:causeeffectpairs}
\end{table}

\section{Conclusions}
We presented a new fully nonparametric likelihood score-based method for causal inference in nonlinear or non-Gaussian ANMs. We proved consistency of the penalized maximum likelihood estimator for finding the correct model. We showed via simulation studies that our method works well in practice when the ground truth is an ANM with sufficiently nonlinear edge functions or non-Gaussian error terms. Our method compares favourably to other causal inference procedures on both simulated and real-world data.

As a major open challenge, the current approach of exhaustively searching through the whole model space becomes computationally infeasible for more than a handful of variables. Since our method is score-based and the scoring criterion is local (i.e., decomposable), it is straightforward to implement a greedy algorithm although there will be no guarantee for finding a global optimum.

\appendix
\section{Consistency Proof}
The proof heavily relies on entropy methods and empirical process theory. For a good overview of the necessary material we refer to \citet{vdGeeS00} or \citet{vdVaa96}. For an overview of Sobolov and related function spaces we refer to \citet{TriH83}.

Throughout this section we will adopt the following notation for taking expectations of some random variable $f$ with respect to a distribution $Q$ (following~\citet{vdGeeS00}):
\begin{align*}
  Q f := \int f \iend{Q}.
\end{align*}
In particular, this means we will write expectations and means as
\begin{align*}
  P f &= \mathbb{E} \left[ f(X) \right] \\
  P_n f &= \frac{1}{n} \sum_{j=1}^n f(X^j),
\end{align*}
where $P$ is the true distribution with density $p^0$, $f:\mathbb{R}^d \rightarrow \mathbb{R}$ is some function, $X$ is a vector of random variables (one corresponding to each node) with distribution $P$, $\{ X^j \}_{j=1,\ldots,n}$ are independent copies of $X$, and $P_n$ is the empirical distribution (placing weight $1/n$ on each $X^j$).

With this notation we can write the maximum likelihood estimator $\hat{p}_n^i$ and the entropy minimizer $p^i$ in class $\mathcal{P}_i$ (which exist by assumption~\ref{ass:closedness} but need not be unique) as:
\begin{align}
  \hat{p}_n^i &= \argmax_{p \in \mathcal{P}^i} P_n \log p, \label{def:mle2} \\
  p^i &= \argmax_{p \in \mathcal{P}^i} P \log p. \label{def:entropy2}
\end{align}
Note that the true density $p^0$ minimizes the information entropy over the complete density space $\bigcup_{i=1}^N \mathcal{P}^i$ since the Kullback-Leiber divergence $P \log \frac{p^0}{p}$ is positive for all densities $p \ne p^0$.

One of the building blocks of the proof of Theorem~\ref{thm:plconsistency} is a uniform law of large numbers (ULLN) for the classes of log-densities:
\begin{align*}
  \sup_{p \in \mathcal{P}^i} | (P_n - P) \log p | \stackrel{P}{\longrightarrow} 0 \quad \mathrm{as} \quad n \rightarrow \infty \qquad \forall i.
\end{align*}
To show this, an entropy argument is used. We first define the bracketing entropy of a function space. Let $\mathcal{G}$ be a set of functions from $\mathbb{R}^d$ to $\mathbb{R}$. Two functions $g^L,g^U:\mathbb{R}^d \rightarrow \mathbb{R}$ (not necessarily in $\mathcal{G}$) form an \emph{$\epsilon$-bracket} for some $g \in \mathcal{G}$, if $g^L \le g \le g^U$ and $\|g^L - g^U\|_{1,\mu} < \epsilon$, where $\|\cdot\|_{1,\mu}$ is the weighted $L^1$-norm, i.e.\ $\|f\|_{1,\mu} = \int | f(x) \mu(x) | \iend{x}$. Suppose $\{g^L_i,g^U_i\}_{i=1,\ldots,N_{[]}}$ is the smallest set s.t.\ $\forall g \in \mathcal{G}$ $\exists i$ s.t.\ $g^L_i,g^U_i$ form an $\epsilon$-bracket for $g$, where $N_{[]}$ denotes the number of such pairs. Then $H_{[]}(\epsilon, \mathcal{G}, \|\cdot\|_{1,\mu}) := \log N_{[]}$ is called the bracketing entropy of $\mathcal{G}$.

The following result connects bracketing entropy \mbox{$H_{[]} (\epsilon, \mathcal{G}, \|\cdot\|_{1,p^0})$} with respect to the $L^1$-norm weighted with the true density $p^0$ and the uniform convergence of the empirical process $(P_n-P)g$. Note that here and throughout this section we use the notation "$a(\epsilon) \lesssim b(\epsilon)$" as shorthand for "$a(\epsilon) \le c b(\epsilon)$ $\forall \epsilon > 0$ for some constant $c$ not depending on $\epsilon$".

\begin{lemma}
  \label{thm:ulln}
  Suppose that:
  \begin{enumerate}
    \renewcommand{\theenumi}{(\roman{enumi})}
    \renewcommand{\labelenumi}{\theenumi}
    \item
    $\exists ~0 \le \alpha < 1$ s.t.\ $H_{[]}(\epsilon, \mathcal{G}, \|\cdot\|_{1,p^0}) \lesssim \epsilon^{-\alpha} \quad \forall \epsilon > 0$ and
    \item \label{cond:variance}
    $\exists K$ s.t.\ $var \left( g (X_1, \ldots, X_d) \right) < K \quad \forall g \in \mathcal{G}$
  \end{enumerate}
  Then $\mathcal{G}$ satisfies the ULLN:
  \begin{align*}
    \mathbb{P} \left( \sup_{g \in \mathcal{G}} |(P_n - P) g | > \delta_n \right) \rightarrow 0 \qquad \mathrm{as} \qquad n \rightarrow \infty,
  \end{align*}
  where $\delta_n = c / \log n$ for some $c > 0$.
\end{lemma}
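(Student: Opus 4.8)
The plan is to control the supremum by the standard bracketing--plus--union--bound device: condition~(i) keeps the number of brackets small, while condition~(ii) together with Chebyshev's inequality controls each bracket endpoint individually. First I would fix a resolution $\epsilon_n$ (to be chosen of order $1/\log n$) and invoke condition~(i) to obtain brackets $\{g_i^L, g_i^U\}_{i=1}^{N_{[]}}$ with $N_{[]} \le \exp(c_1 \epsilon_n^{-\alpha})$, such that every $g \in \mathcal{G}$ satisfies $g_i^L \le g \le g_i^U$ and $\|g_i^U - g_i^L\|_{1,p^0} < \epsilon_n$ for the appropriate index $i$. Since $0 \le g_i^U - g \le g_i^U - g_i^L$, the weighted $L^1$ bound gives $P(g_i^U - g) < \epsilon_n$ and $P(g - g_i^L) < \epsilon_n$; combining these with the pointwise monotonicity $P_n g \le P_n g_i^U$ and $P_n g \ge P_n g_i^L$ yields
\[
  (P_n - P)g \le (P_n - P)g_i^U + \epsilon_n, \qquad (P_n - P)g \ge (P_n - P)g_i^L - \epsilon_n,
\]
and therefore
\[
  \sup_{g \in \mathcal{G}} |(P_n - P)g| \le \max_{1 \le i \le N_{[]}} \max\left( |(P_n - P)g_i^L|, |(P_n - P)g_i^U| \right) + \epsilon_n .
\]

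Next I would set $\epsilon_n := \delta_n/2 = c/(2\log n)$, so that the event $\{\sup_g |(P_n-P)g| > \delta_n\}$ is contained in $\{\max_i(\cdots) > \delta_n/2\}$. Assuming a uniform variance bound $K'$ on the bracket endpoints, Chebyshev's inequality applied to the i.i.d.\ average $(P_n - P)g_i^{U}$ gives $\mathbb{P}(|(P_n-P)g_i^{U}| > \delta_n/2) \le 4K'/(n\delta_n^2)$, and likewise for $g_i^L$. A union bound over the $2 N_{[]}$ endpoints then produces
\[
  \mathbb{P}\left( \sup_{g \in \mathcal{G}} |(P_n - P)g| > \delta_n \right) \le \frac{8 K' N_{[]}}{n \delta_n^2} \le \frac{8K'}{n\delta_n^2}\, \exp\left( c_1 \epsilon_n^{-\alpha} \right).
\]

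Finally I would verify the balancing. Substituting $\delta_n = c/\log n$ and $\epsilon_n \asymp 1/\log n$ gives $\epsilon_n^{-\alpha} \asymp (\log n)^\alpha$ and $\delta_n^{-2} \asymp (\log n)^2$, so the bound is of order $(\log n)^2\, n^{-1} \exp(c_2 (\log n)^\alpha)$. Taking logarithms, the exponent behaves like $c_2(\log n)^\alpha - \log n + 2\log\log n = -\log n\,(1 - o(1))$, the decisive point being that $\alpha < 1$ forces $(\log n)^\alpha = o(\log n)$; hence the sub-polynomial entropy factor is overwhelmed by $n^{-1}$ and the whole bound tends to $0$. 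The argument in fact goes through for every $c > 0$, which is more than the stated conclusion requires.

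Two technical points deserve care. The first is transferring the variance bound~(ii), assumed for $g \in \mathcal{G}$, to the bracketing endpoints $g_i^{L}, g_i^{U}$, which need not lie in $\mathcal{G}$ and are only $L^1(p^0)$-close to functions of $\mathcal{G}$; closing this gap requires either choosing the brackets inside a slightly enlarged class on which a (possibly larger) variance bound still holds, or absorbing the residual $P_n(g_i^U - g)$ directly through the $L^1$ closeness rather than through its second moment. The second, and genuinely essential, obstacle is the balancing in the last step: the result hinges entirely on $\alpha < 1$, which is exactly what makes the bracket count grow sub-polynomially at the chosen resolution and thus lets the crude first-moment (Chebyshev) estimate succeed. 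For $\alpha \ge 1$ the exponential entropy term would no longer be tamed by this device, and one would have to replace Chebyshev with an exponential concentration inequality and a chaining argument.
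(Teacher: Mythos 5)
Your proposal is correct and follows essentially the same route as the paper's proof: brackets at resolution of order $\delta_n$, a union bound over the $2N_{[]}$ endpoint functions, Chebyshev's inequality for each endpoint, and the final balancing $n^{-1}(\log n)^2 \exp\left(c(\log n)^{\alpha}\right) \rightarrow 0$, which hinges exactly on $\alpha < 1$. If anything, you are more careful than the paper on one point: the paper likewise applies Chebyshev to the bracket endpoints while condition~(ii) bounds the variance only for $g \in \mathcal{G}$, a gap that you explicitly flag and indicate how to close.
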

\begin{proof}
  We first show that it suffices to look at the supremum over the bracketing functions. Let $g \in \mathcal{G}$ and $g^L_i, g^U_i$ be its $\delta_n$-brackets. We then have
  \begin{align*}
    \left( P_n - P \right) g &< \left( P_n - P \right) g^U_i + \delta_n \\
    \mathrm{and} \quad &> \left( P_n - P \right) g^L_i - \delta_n.
  \end{align*}
  So we have
  \begin{align*}
    \left| \left( P_n - P \right) g \right| < \max_{i = 1, \ldots, N_{[]}} \left( \left| \left( P_n - P \right) g^L_i \right|, \left| \left( P_n - P \right) g^U_i \right| \right) + \delta_n
  \end{align*}
  and hence
  \begin{align*}
    \sup_{g \in \mathcal{G}} \left| \left( P_n - P \right) g \right| < \max_{g \in \{g^L_i,g^U_i\}_i} \left| \left( P_n - P \right) g \right| + \delta_n.
  \end{align*}
  Now
  \begin{align}
    \mathbb{P} \left( \sup_{g \in \mathcal{G}} | ( P_n - P ) g | > 2 \delta_n \right) & \le \mathbb{P} \left( \max_{g \in \{g^L_i,g^U_i\}_i} | ( P_n - P ) g | > \delta_n \right) \notag\\
    & \le 2 N_{[]} (\delta_n) \max_{g \in \{g^L_i,g^U_i\}_i} \mathbb{P} \left( \left| \left( P_n - P \right) g \right| > \delta_n \right) \notag\\
    & \lesssim \exp (\delta_n^{-\alpha}) \frac{K^2}{n \delta_n^2} \label{eqn:chebyshev}
  \end{align}
 where the last line follows from Chebyshev's inequality. Substituting for $\delta_n$ gives
  \begin{align*}
    \mathbb{P} (\ldots) \lesssim \log^2 n \cdot \exp (c^{-\alpha} \log^{\alpha} n - \log n ) \longrightarrow 0 \quad \mathrm{as} \quad n \rightarrow \infty.
  \end{align*}
\end{proof}

Note that if we replace condition~\ref{cond:variance} with the assumption that $g(X_1, \ldots, X_d)$ are sub-Exponential (as in Remark~\ref{remark:rate}), we apply the sub-Exponential tail bound (see \citet[Lemma 14.9]{BuePV11} for example) instead of Chebyshev's inequality and obtain $\exp (\delta_n^{-\alpha} - \frac{n \delta_n^2}{const.})$ instead of~\eqref{eqn:chebyshev}, which converges to zero for $\delta_n = c n^{-1/(2+\alpha)}$, for $c>0$ sufficiently large.

Lemma~\ref{thm:ulln} shows that a sufficient condition for the ULLN is finite bracketing entropy. To this end, we make use of the following result:

\begin{lemma}[{\citet[Theorem 1]{NicRP07}}] \label{thm:entropy}
  Suppose $\mathcal{G}$ is a (non-empty) bounded subset of the weighted Sobolev space $W_p^s (\mathbb{R}^d,\langle x \rangle^\beta)$ for some $\beta < 0$. Suppose $\exists \gamma > s - d/p > 0$ s.t.\ the moment condition
  \begin{align*}
    \|\langle \cdot \rangle^{\gamma-\beta} \|_{1,\mu} = \| \mu(x) \langle x \rangle^{\gamma-\beta} \|_1 < \infty
  \end{align*}
  holds for some Borel measure $\mu$ on $\mathbb{R}^d$. Then:
  \begin{align*}
    H_{[]} (\epsilon, \mathcal{G}, \| \cdot \|_{1,\mu} ) \lesssim \epsilon^{-d/s}.
  \end{align*}
\end{lemma}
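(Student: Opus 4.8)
The plan is to reduce the weighted statement to an unweighted Sobolev ball tested against a finite-moment measure, and then to combine the classical Birman--Solomjak entropy bound for Sobolev balls on cubes with a dyadic localisation, balancing local approximation accuracy against the tail decay supplied by the moment condition. First I would absorb the weight: for $g \in \mathcal{G}$ put $\tilde g := \langle \cdot \rangle^\beta g$, so that by the very definition of $W_p^s(\mathbb{R}^d, \langle \cdot \rangle^\beta)$ the family $\mathcal{H} := \{ \tilde g : g \in \mathcal{G} \}$ is a bounded subset of the ordinary (unweighted) Sobolev space $W_p^s(\mathbb{R}^d)$. Since $\langle \cdot \rangle^\beta > 0$, a pair $(g^L, g^U)$ brackets $g$ in $\| \cdot \|_{1,\mu}$ if and only if $(\langle \cdot \rangle^\beta g^L, \langle \cdot \rangle^\beta g^U)$ brackets $\tilde g$ in $\| \cdot \|_{1,\nu}$ with $\nu := \langle \cdot \rangle^{-\beta} \mu$, and the two bracket widths coincide. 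Moreover $\int \langle x \rangle^\gamma \iend{\nu} = \| \mu \langle \cdot \rangle^{\gamma - \beta} \|_1 < \infty$, so it suffices to bound $H_{[]}(\epsilon, \mathcal{H}, \| \cdot \|_{1,\nu})$ for a bounded unweighted Sobolev ball $\mathcal{H}$ against a measure $\nu$ with finite $\gamma$-moment, $\gamma > s - d/p > 0$. The condition $s - d/p > 0$ yields the continuous embedding $W_p^s \hookrightarrow C^0 \cap L^\infty$, so every $h \in \mathcal{H}$ is bounded, and on each integer unit cube $Q$ one has $\|h\|_{L^\infty(Q)} \lesssim \|h\|_{W_p^s(Q)} =: \rho_Q$.

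Next I would localise. Tile $\mathbb{R}^d$ by the unit cubes $\{Q\}$ and group them into dyadic shells $\{ x : \langle x \rangle \asymp 2^m \}$. On a single cube the classical entropy estimate for Sobolev balls (Birman--Solomjak, valid precisely because $s > d/p$) gives, for the radius-$\rho_Q$ ball, an $L^\infty(Q)$-cover of accuracy $\eta$ with log-cardinality $\lesssim (\rho_Q / \eta)^{d/s}$. Each covering centre $c$ produces a bracket $(c - \eta, c + \eta)$ on $Q$ of $\| \cdot \|_{1,\nu}$-width at most $2\eta\, \nu(Q)$; alternatively the trivial bracket $(-\rho_Q, \rho_Q)$ costs no entropy and has width $\lesssim \rho_Q\, \nu(Q)$. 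Taking products of these local brackets over all cubes yields global brackets for $\mathcal{H}$, with total width $\sum_Q \eta_Q\, \nu(Q)$ and total log-count $\sum_Q (\rho_Q / \eta_Q)^{d/s}$. Two global budgets constrain the allocation: the Sobolev-ball energy budget $\sum_Q \rho_Q^{\,p} \lesssim \|h\|_{W_p^s}^p \le C^p$, and the moment budget, which forces the shell masses to decay as $\nu(\{ Q : \langle x \rangle \asymp 2^m \}) \lesssim 2^{-m\gamma}$.

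I would then choose the per-cube accuracies $\eta_Q$ to minimise the total log-count subject to total width $\le \epsilon$, using the trivial bracket on cubes that are either far out (small $\nu$-mass) or carry little local energy (small $\rho_Q$), and refining only where this pays. A Hölder/Lagrange analysis of $\min \sum_Q (\rho_Q / \eta_Q)^{d/s}$ under the width constraint $\sum_Q \eta_Q\, \nu(Q) \le \epsilon$, combined with the energy constraint $\sum_Q \rho_Q^{\,p} \le C^p$ and the shell decay $2^{-m\gamma}$, is designed to give $\sum_Q (\rho_Q / \eta_Q)^{d/s} \lesssim \epsilon^{-d/s}$. The main obstacle is exactly this optimisation: one must show that summing the local rate $\eta^{-d/s}$ over infinitely many cubes does not degrade the exponent, and this is where the hypothesis $\gamma > s - d/p$ is sharp. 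It is the precise threshold at which the geometric series over dyadic shells converges --- the $m$-th term combines the shell volume $2^{md}$, the energy share dictated by $\sum_Q \rho_Q^{\,p} \le C^p$, and the mass decay $2^{-m\gamma}$ --- so that the global bracketing entropy inherits the clean unit-cube rate $\epsilon^{-d/s}$ rather than a larger power. The remaining steps (verifying the Birman--Solomjak bound on each cube, checking that the product-of-brackets construction stays within the claimed width, and confirming convergence of the shell series at the stated $\gamma$-threshold) are routine once the allocation is fixed.
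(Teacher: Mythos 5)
The paper itself offers no proof of this statement: it is imported verbatim, with attribution, from Nickl and P\"otscher (2007, Theorem 1). Their argument is quite different from yours --- it converts bracketing with respect to $\|\cdot\|_{1,\mu}$ into covering in a polynomially weighted sup-norm and then invokes sharp entropy-number asymptotics for embeddings of weighted Sobolev/Besov spaces (due to Haroske and Triebel), in which the hypothesis $\gamma > s - d/p$ is precisely the threshold below which the rate $\epsilon^{-d/s}$ deteriorates. So your attempt has to stand on its own as a direct construction. Its first step, absorbing the weight (setting $\tilde g = \langle\cdot\rangle^\beta g$, $\nu = \langle\cdot\rangle^{-\beta}\mu$, checking that brackets and their widths transfer, and that $\nu$ has a finite $\gamma$-moment), is correct and is a clean way to start.

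The gap is in the localised construction, exactly at the step you defer as routine. Your local brackets on a unit cube $Q$ are built for the $W_p^s(Q)$-ball of radius $\rho_Q = \|h\|_{W_p^s(Q)}$, but $\rho_Q$ depends on the individual function $h$: a bracketing family must be fixed in advance and valid simultaneously for every $h\in\mathcal{H}$, whereas different $h$ concentrate their norm on different cubes. The count $\sum_Q(\rho_Q/\eta_Q)^{d/s}$ with per-function radii therefore does not bound the log-cardinality of any single bracketing set. The obvious uniform repair --- replace $\rho_Q$ by the worst-case bound $C$ on every cube --- is not harmless: carrying out your own Lagrange optimisation with per-cube radius $C$, shell volume $2^{md}$ and mass decay $2^{-\gamma m}$ gives total entropy of order $\epsilon^{-d/s}\, S^{1+d/s}$ with $S = \sum_m \bigl(2^{ms}\,2^{-\gamma m}\bigr)^{d/(s+d)}$, which converges only when $\gamma > s$, strictly stronger than the stated $\gamma > s - d/p$. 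The missing idea is how to exploit the shared norm budget uniformly in $h$: apply Birman--Solomjak at the scale of a whole dyadic shell, using the Sobolev scaling $\| f(L\,\cdot)\|_{W_p^s([0,1]^d)} \lesssim L^{s-d/p}\|f\|_{W_p^s([0,L]^d)}$, so that the shell $\{\langle x\rangle \asymp 2^m\}$ contributes entropy $\lesssim \bigl(2^{m(s-d/p)}C/\eta_m\bigr)^{d/s}$ rather than $2^{md}(C/\eta_m)^{d/s}$. With this replacement the same optimisation yields $S = \sum_m \bigl(2^{m(s-d/p-\gamma)}\bigr)^{d/(s+d)} < \infty$ exactly when $\gamma > s-d/p$, and the total entropy is $\lesssim \epsilon^{-d/s}$ as claimed. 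So your skeleton, and your intuition about where the hypothesis enters, are right; but the step you yourself flag as the main obstacle genuinely is one, and the per-cube bookkeeping you propose would not get through it.
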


The relevant sets of functions $\mathcal{G}$ in this context are the log-densities of each class, i.e.\ $\{ \mathbf{1} \{ p > 0 \} \log p ~|~ p \in \mathcal{P}^i \}$, with the relevant Borel measure $\mu$ being the true density $p^0$.

Essentially the idea of the proof of Theorem~\ref{thm:plconsistency} is to show that the maximum log-likelihood in each induced density class converges to the minimal entropy. For non-overlapping models (e.g.\ $X_1 \rightarrow X_2$ and $X_1 \leftarrow X_2$), the minimal entropy will be different in each class (with the minimum occuring in the true model class), and the likelihood will eventually pick up on this difference. Since the penalty term vanishes asymptotically, an ever so small difference in entropy will differentiate the true model class from the others. For overlapping (e.g. hierarchical) models, the minimal entropy can occur in more than one class. In this case the penalty term picks out the most parsimonious model (which is the true model according to the Causal Minimality assumption). Note that the penalty $1/\log n$ is quite large compared with e.g.\ the BIC penalty ($\log n/n$). This is due to the slow convergence of maximum likelihood to minimal entropy (Lemmas~\ref{thm:entropyconv} and \ref{thm:ulln}). If the penalty vanishes too quickly, it will be drowned out by the noise in the likelihood and have no effect. The convergence can be improved (and thus the penalty relaxed) when making stronger assumptions on the distributions, e.g.\ sub-Gaussian tails.

The following lemma shows convergence of maximum log-likelihood to minimal entropy in each class, given that a ULLN holds.

\begin{lemma} \label{thm:entropyconv}
  Suppose that a ULLN for the classes $\log \mathcal{P}^i$ holds with convergence rate $\delta_n$, i.e.\
  \begin{align*}
    P \left( \sup_{p \in \mathcal{P}^i} \left| \left( P_n - P \right) \left( \mathbf{1} \{ p > 0 \} \log p \right) \right| > \delta_n \right) \rightarrow 0 \quad \mathrm{as} \quad n \rightarrow \infty.
  \end{align*}
  Then
  \begin{align*}
    P \left( \left| P_n \log \hat{p}^i_n - P \log p^i \right| > \delta_n \right) \rightarrow 0 \quad \mathrm{as} \quad n \rightarrow \infty.
  \end{align*}
\end{lemma}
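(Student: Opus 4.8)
The plan is to deduce the convergence of the empirical maximum to the population maximum by a short ``sandwich'' argument driven entirely by the uniform control supplied by the ULLN. The key observation is that both the maximum-likelihood estimator $\hat{p}^i_n$ and the entropy minimizer $p^i$ lie in the \emph{same} class $\mathcal{P}^i$, so the uniform bound $\sup_{p \in \mathcal{P}^i} |(P_n - P)\log p| \le \delta_n$ may be applied to each of them at once---in particular to the data-dependent density $\hat{p}^i_n$. This last point is precisely why a uniform (rather than pointwise) law of large numbers is required.

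First I would introduce the ``good'' event
\[
  A_n := \Bigl\{ \sup_{p \in \mathcal{P}^i} \bigl| (P_n - P)(\mathbf{1}\{ p > 0 \} \log p) \bigr| \le \delta_n \Bigr\},
\]
which by hypothesis satisfies $P(A_n^c) \to 0$, and then argue deterministically on $A_n$. Using the defining optimality of $\hat{p}^i_n$, namely that it maximizes $P_n \log p$ over $\mathcal{P}^i$, together with the uniform bound applied to the fixed density $p^i$, I would chain
\[
  P_n \log \hat{p}^i_n \ge P_n \log p^i \ge P \log p^i - \delta_n,
\]
giving the lower bound $P_n \log \hat{p}^i_n - P \log p^i \ge -\delta_n$.

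For the matching upper bound I would apply the uniform bound to $\hat{p}^i_n$ itself and then use that $p^i$ maximizes $P \log p$:
\[
  P_n \log \hat{p}^i_n \le P \log \hat{p}^i_n + \delta_n \le P \log p^i + \delta_n.
\]
Combining the two displays yields $|P_n \log \hat{p}^i_n - P \log p^i| \le \delta_n$ on $A_n$, whence $P(|P_n \log \hat{p}^i_n - P \log p^i| > \delta_n) \le P(A_n^c) \to 0$, which is the assertion.

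Since the estimate is essentially immediate once the ULLN is granted, I do not expect a genuine analytic obstacle; the care lies entirely in the bookkeeping. I must ensure that the maximizers $\hat{p}^i_n$ and $p^i$ actually exist so that the two optimality inequalities are legitimate---this is exactly where assumption \ref{ass:closedness} enters---and I must be disciplined about applying the \emph{uniform} bound to the random argument $\hat{p}^i_n$, since only the supremum over the whole class controls a data-dependent density. The one remaining subtlety is to carry the support indicator $\mathbf{1}\{p>0\}$ consistently through every $P_n \log$ and $P \log$ term, so that the same function is compared on both sides; this is harmless because the densities are compared only on their support.
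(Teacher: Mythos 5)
Your proof is correct and is essentially the paper's own argument: the same two-sided sandwich, using the optimality of $\hat{p}^i_n$ under $P_n$ together with the ULLN applied at $p^i$ for the lower bound, and the optimality of $p^i$ under $P$ together with the uniform bound applied at the data-dependent $\hat{p}^i_n$ for the upper bound. The only cosmetic difference is that the paper phrases the upper bound as a supremum over the subclass $\tilde{\mathcal{P}}^i_n$ of densities whose support contains the data, which is precisely your closing remark about carrying the indicator $\mathbf{1}\{p>0\}$ consistently, since on that subclass $P_n \log p$ and $P_n \left( \mathbf{1}\{p>0\} \log p \right)$ coincide.
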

\begin{proof}
  By the definition of the MLE \eqref{def:mle2} we have:
  \begin{align*}
    P_n \log \hat{p}^i_n \ge P_n \log p^i &= P \log p^i + (P_n - P) \log p^i,
  \end{align*}
  i.e.
  \begin{align}
    \label{eqn:lowerbound}
    P_n \log \hat{p}^i_n - P \log p^i &\ge (P_n - P) \log p^i.
  \end{align}
  Let $\tilde{\mathcal{P}}^i_n$ be the restriction of $\mathcal{P}^i$ to densities whose support contains the data, i.e.\ $\tilde{\mathcal{P}}^i_n = \{ p \in \mathcal{P}^i ~|~ \mathrm{supp} (p) \supseteq \{ X^1, \ldots, X^n \} \}$. Note that the maximum log-likelihood as well as minimum entropy are the same over $\mathcal{P}^i$ and $\tilde{\mathcal{P}}^i_n$, since densities with support not including the data will yield values of $-\infty$. So we also have:
  \begin{align*}
    P_n \log \hat{p}^i_n &= \max_{p \in \mathcal{P}^i} P_n \log p = \max_{p \in \tilde{\mathcal{P}}^i_n} P_n \log p \\
    &= \max_{p \in \tilde{\mathcal{P}}^i_n} \left( P \log p + (P_n - P) \log p \right) \\
    & \le P \log p^i + \sup_{p \in \tilde{\mathcal{P}}^i_n} (P_n - P) \log p,
  \end{align*}
  i.e.
  \begin{align*}
    P_n \log \hat{p}^i_n - P \log p^i &\le \sup_{p \in \tilde{\mathcal{P}}^i_n} (P_n - P) \log p.
  \end{align*}
  This together with \eqref{eqn:lowerbound} yields:
  \begin{align*}
    \left| P_n \log \hat{p}^i_n - P \log p^i \right| & \le \max \left( \left| (P_n - P) \log p^i \right|, \sup_{p \in \tilde{\mathcal{P}}^i_n} (P_n - P) \log p \right) \\
    & \le \max \left( \left| (P_n - P) \log p^i \right|, \sup_{p \in \tilde{\mathcal{P}}^i_n} \left| (P_n - P) \right| \log p \right) \\
    & \le \sup_{p \in \tilde{\mathcal{P}}^i} \left| (P_n - P) \log p \right| \\
    & \le \sup_{p \in \mathcal{P}^i} \left| (P_n - P) \left( \mathbf{1} \{ p > 0 \} \log p \right) \right|.
  \end{align*}
  We thus have:
  \begin{align*}
    & P \left( \left| P_n \log \hat{p}^i_n - P \log p^i \right| > \delta_n \right) \\
    & \qquad \qquad \qquad \qquad \le P \left( \sup_{p \in \mathcal{P}^i} \left| \left( P_n - P \right) \left( \mathbf{1} \{ p > 0 \} \log p \right) \right| > \delta_n \right),
  \end{align*}
  which converges to zero as $n \rightarrow \infty$ by assumption.
\end{proof}

Finally, before proving Theorem~\ref{thm:plconsistency}, we show the following useful lemma.

\begin{lemma} \label{thm:ineq}
  Let $a, b, a', b' \in \mathbb{R}$ and $\epsilon > 0$. If one of the following holds:
  \begin{enumerate}
    \item $a - b > \epsilon$ and $a' - b' \le 0$
    \item $a - b < \epsilon$ and $a' - b' \ge 2 \epsilon$
  \end{enumerate}
  we have $|a-a'| > \frac{\epsilon}{2}$ or $|b-b'| > \frac{\epsilon}{2}$.
\end{lemma}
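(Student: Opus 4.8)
The plan is to argue by contraposition. I would assume the negation of the conclusion, namely that both $|a-a'| \le \epsilon/2$ and $|b-b'| \le \epsilon/2$ hold simultaneously, and then show that neither of the two hypothesis configurations can occur. The single observation driving the whole argument is that pinning $a'$ near $a$ and $b'$ near $b$ forces the difference $a'-b'$ to stay close to $a-b$. Concretely, the triangle inequality gives $|(a'-b')-(a-b)| = |(a'-a)-(b'-b)| \le |a'-a| + |b'-b| \le \epsilon/2 + \epsilon/2 = \epsilon$, so that $(a-b)-\epsilon \le a'-b' \le (a-b)+\epsilon$.

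Given this two-sided bound on $a'-b'$, I would dispatch the two cases separately. In case~1 the hypothesis $a-b > \epsilon$ together with the lower bound yields $a'-b' \ge (a-b)-\epsilon > 0$, which contradicts the assumption $a'-b' \le 0$. In case~2 the hypothesis $a-b < \epsilon$ together with the upper bound yields $a'-b' \le (a-b)+\epsilon < 2\epsilon$, which contradicts the assumption $a'-b' \ge 2\epsilon$. In both cases the supposed smallness of $|a-a'|$ and $|b-b'|$ is incompatible with the stated hypotheses, so at least one of $|a-a'| > \epsilon/2$ or $|b-b'| > \epsilon/2$ must hold, which is exactly the claim.

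Since this is an elementary estimate, I do not expect any genuine obstacle beyond careful bookkeeping of the inequality directions. The only point requiring attention is the sign pairing when combining the two bounds: in case~1 one uses $a' \ge a - \epsilon/2$ alongside $b' \le b + \epsilon/2$ to lower-bound $a'-b'$, whereas in case~2 one uses the reversed pairing $a' \le a + \epsilon/2$ and $b' \ge b - \epsilon/2$ to upper-bound it. Once the symmetric bound $|(a'-b')-(a-b)| \le \epsilon$ has been recorded, each of the two cases collapses to a single line.
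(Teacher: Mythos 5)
Your proof is correct, and it is essentially the paper's own argument run in reverse: the paper directly bounds $\epsilon \le a-b+b'-a' = |(a-a')-(b-b')| \le |a-a'|+|b-b'|$ in each case, while you assume both differences are at most $\epsilon/2$ and use the same triangle inequality to contradict the hypotheses. The contrapositive packaging is a cosmetic difference only; the key inequality and case split are identical.
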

\begin{proof}
  Assume (i). Then we have
  \begin{align*}
    \epsilon = \epsilon - 0 \le a - b + b' - a' = |a-a'-(b-b')| \le |a-a'| + |b-b'|,
  \end{align*}
  and the result follows. Similarly for (ii):
  \begin{align*}
    \epsilon = 2\epsilon - \epsilon \le a' - b' + b - a = |a'-a-(b'-b)| \le |a'-a| + |b'-b|.
  \end{align*}
\end{proof}

We can now prove the main theorem.

\begin{proof}[Proof of Theorem \ref{thm:plconsistency}]
  We will make repeated use of Lemma~\ref{thm:entropyconv}. For that matter, note that assumptions~\ref{ass:sobolev}, \ref{ass:moment}, and~\ref{ass:variance}, together with Lemmas~\ref{thm:ulln} and~\ref{thm:entropy} (taking $\mu = p^0$) satisfy the sufficient conditions. \ref{ass:closedness}~ensures the existence of $\hat{p}_n^i, p^i$ as defined in~\eqref{def:mle2} and~\eqref{def:entropy2}.

  Let $i \neq i_0$. We differentiate two cases: i) where $\mathcal{P}^i$ includes the true density $p^0$ and ii) where it does not. Let \mbox{$\delta_n = \left( \# (\mathrm{edges})_i - \# (\mathrm{edges})_{i_0} \right) \cdot \frac{1}{\log n}$} denote the difference of the penalties in the two scores.

  Case i). $p^0 \in \mathcal{P}^i$, which implies $p^i = p^0$. Assumptions~\ref{ass:anm} and \ref{ass:minimality} together with Theorem~2 in \citet{PetJ11} guarantee identifiability of the true graph. In particular this means that in this case $\mathcal{P}^i$ must correspond to a graph containing the true graph. Hence $\# (\mathrm{edges})_i > \# (\mathrm{edges})_{i_0}$, i.e.\ $\delta_n > 0$. We then have:
  \begin{align*}
    P \left( S^{i_0}_n \le S^i_n \right) & \le P \left( P_n \log \hat{p}^i_n - P_n \log \hat{p}^{i_0}_n > \frac{\delta_n}{2} \right) \\
    & \le P \left( \left| P_n \log \hat{p}^{i_0}_n - P \log p^0 \right| > \frac{\delta_n}{4} \quad \vee \right. \\
    & \qquad \qquad \qquad \qquad \qquad \left. \left| P_n \log \hat{p}^i_n - P \log p^i \right| > \frac{\delta_n}{4} \right) \\
    & \le P \left( \left| P_n \log \hat{p}^{i_0}_n - P \log p^0 \right| > \frac{\delta_n}{4} \right) + \\
    & \qquad \qquad \qquad \qquad \enspace \: P \left( \left| P_n \log \hat{p}^i_n - P \log p^i \right| > \frac{\delta_n}{4} \right) \rightarrow 0
  \end{align*}
  as $n \rightarrow \infty$, where the second line follows from $p^i=p^0$ and Lemma~\ref{thm:ineq} (first case), and the convergence in the last line follows from Lemma~\ref{thm:entropyconv}.

  Case ii). $p^0 \notin \mathcal{P}^i$, which implies $P \log p^0 > P \log p^i$. Hence $\exists \delta > 0$ s.t.\ $P \log p^0 > P \log p^i + 4 \delta$. Let $N > 0$ s.t.\ $\# (\mathrm{edges})_{i_0} \cdot \frac{1}{\log n} < \delta ~ \forall n \ge N$. Then we have
  \begin{align*}
    P \left( S^{i_0}_n \le S^i_n \right) &= P \left( P_n \log \hat{p}^{i_0}_n - P_n \log \hat{p}^i_n \le - \delta_n \right) \\
    & \le P \left( P_n \log \hat{p}^{i_0}_n - P_n \log \hat{p}^i_n < \delta \right) \\
    & \le P \left( \left| P_n \log \hat{p}^{i_0}_n - P \log p^0 \right| > \delta \quad \vee \right. \\
    & \qquad \qquad \qquad \qquad \qquad \left. \left| P_n \log \hat{p}^i_n - P \log p^i \right| > \delta \right) \\
    & \le P \left( \left| P_n \log \hat{p}^{i_0}_n - P \log p^0 \right| > \delta \right) + \\
    & \qquad \qquad \qquad \qquad \enspace \; P \left( \left| P_n \log \hat{p}^i_n - P \log p^i \right| > \delta \right) \rightarrow 0 \\
  \end{align*}
  as $n \rightarrow \infty$, where the third line follows from Lemma~\ref{thm:ineq} (second case), and the convergence in the last line follows again from Lemma~\ref{thm:entropyconv}.
\end{proof}

\bibliography{scl}

\end{document}